\documentclass{article}

\usepackage[final]{neurips_2024}
\usepackage{fullpage}
\usepackage[utf8]{inputenc} 
\usepackage[T1]{fontenc}    
\usepackage{hyperref}       
\usepackage{url}            
\usepackage{booktabs}       
\usepackage{amsfonts}       
\usepackage{nicefrac}       
\usepackage{microtype}      
\usepackage{xcolor}
\usepackage[parfill]{parskip}
\usepackage{mathtools}
\usepackage{enumitem}
\usepackage{tikz}
\usepackage{dsfont}
\usepackage{amssymb,amsmath,amsthm,graphicx,multicol,bm,graphics, bbm}
\usepackage[mathscr]{euscript}

\newcommand\E{\mathbb{E}}

\newcommand\Var{\mathsf{Var}}

\renewcommand\P{\mathbb{P}}

\newcommand{\defn}[1]{\textbf{\emph{#1}}}
\newcommand{\mv}{\mathrm{MV}}

\newcommand\dis{D}

\newcommand\testerr{L}

\newcommand{\Dc}{\mathcal{D}}
\newcommand{\tie}{\text{TIE}}
\newcommand{\wtwr}{\widetilde{\wr}}

\DeclareMathOperator*{\argmax}{arg\,max}

\hypersetup{
	colorlinks,
	linkcolor={red!40!gray},
	citecolor={blue!40!gray},
	urlcolor={blue!70!gray}
}

\newtheorem{theorem}{Theorem}
\newtheorem{proposition}{Proposition}
\newtheorem{conjecture}{Conjecture}
\newtheorem{corollary}{Corollary}
\newtheorem{lemma}{Lemma}

\newtheorem{definition}{Definition}

\newcommand{\cd}{\mathcal{D}}
\newcommand{\cx}{\mathcal{X}}

\newcommand{\one}{\mathds{1}}
\newcommand{\hmv}{h_\rho^\mv}
\newcommand{\elh}{\E_\rho[L(h)]}
\newcommand{\edh}{\E_{\rho^2}[D(h,h')]}
\newcommand{\ellh}{\E_{\rho^2}[L(h,h')]}
\newcommand{\prd}{\P_\cd}
\newcommand{\exd}{\E_\cd}
\newcommand{\prr}{\P_{\rho^2}}
\newcommand{\prho}{\P_\rho}

\renewcommand{\wr}{W_\rho}

\title{How many classifiers do we need?}

\author{
  Hyunsuk Kim \\
  Department of Statistics\\
  University of California, Berkeley\\
  \texttt{hyskim7@berkeley.edu} \\
  \And
  Liam Hodgkinson \\
  School of Mathematics and Statistics \\
  University of Melbourne, Australia \\
  \texttt{lhodgkinson@unimelb.edu.au}
  \AND
  Ryan Theisen \\
  Harmonic Discovery \\
  \texttt{ryan@harmonicdiscovery.com} \\
  \And
  Michael W. Mahoney\\
  ICSI, LBNL, and Dept. of Statistics \\
  University of California, Berkeley\\
  \texttt{mmahoney@stat.berkeley.edu}
}

\begin{document}

\maketitle

\begin{abstract}
As performance gains through scaling data and/or model size experience diminishing returns, it is becoming increasingly popular to turn to ensembling, where the predictions of multiple models are combined to improve accuracy. 
In this paper, we provide a detailed analysis of how the disagreement and the polarization (a notion we introduce and define in this paper) among classifiers relate to the performance gain achieved by aggregating individual classifiers, for majority vote strategies in classification tasks.
We address these questions in the following ways. 
(1) An upper bound for polarization is derived, and we propose what we call a neural polarization law: most interpolating neural network models are 4/3-polarized. Our empirical results not only support this conjecture but also show that polarization is nearly constant for a dataset, regardless of hyperparameters or architectures of classifiers. 
(2) The error of the majority vote classifier is considered under restricted entropy conditions, and we present a tight upper bound that indicates that the disagreement is linearly correlated with the target, and that the slope is linear in the polarization.
(3) We prove results for the asymptotic behavior of the disagreement in terms of the number of classifiers, which we show can help in predicting the performance for a larger number of classifiers from that of a smaller number. 
Our theories and claims are supported by empirical results on several image classification tasks with various types of neural networks.
\end{abstract}


\section{Introduction}

As performance gains through scaling data and/or model size experience diminishing returns, it is becoming increasingly popular to turn to \emph{ensembling}, where the predictions of multiple models are combined, both to improve accuracy and to form more robust conclusions than any individual model alone can provide. 
In some cases, ensembling can produce substantial benefits, particularly when increasing model size becomes prohibitive. 
In particular, for large neural network models, \emph{deep ensembles} \cite{lakshminarayanan2017simple} are especially popular. 
These ensembles consist of independently trained models on the same dataset, often using the same hyperparameters, but starting from different initializations. 

The cost of producing new classifiers can be steep, and it is often unclear whether the additional performance gains are worth the cost. 
Assuming that constructing two or three classifiers is relatively cheap, procedures capable of deciding whether to continue producing more classifiers are needed. 
To do so requires a precise understanding of how to predict ensemble performance. 
Of particular interest are majority vote strategies in classification tasks, noting that regression tasks can also be formulated in this way by clustering outputs. 
In this case, one of the most effective avenues for predicting performance is the \emph{disagreement} \cite{jiang2021assessing,baek2022agreement}: measuring the degree to which classifiers provide different conclusions over a given dataset. Disagreement is concrete, easy to compute, and strongly linearly correlated with majority vote prediction accuracy, leading to its use in many applications. However, \emph{a priori}, the precise linear relationship between disagreement and accuracy is unclear, preventing the use of disagreement for predicting ensemble performance. 

Our goal in this paper is to go beyond disagreement-based analysis to provide a more quantitative understanding of the number of classifiers one should use to achieve a desired level of performance in modern practical applications, in particular for neural network models. In more detail, our contributions are as follows. 

\begin{enumerate}[label=(\roman*), leftmargin=*]
\item We introduce and define the concept of \textbf{polarization}, a notion
that measures the higher-order dispersity of the error rates at each data point, and which indicates how polarized the ensemble is from the ground truth. We state and prove an upper bound for polarization (Theorem~\ref{thm:polarization4/3}). Inspired by the theorem, we propose what we call a \textbf{neural polarization law} (Conjecture~\ref{conj:polarization4/3}): most interpolating (Definition \ref{def:interpolating}) neural network models are 4/3-polarized. We provide empirical results supporting the conjecture (Figures~\ref{fig:polarity} and~\ref{fig:polarity2}).

\item Using the notion of polarization, we develop a 
refined set of \textbf{bounds on the majority vote test error rate}. For one, we provide a sharpened bound for any ensembles with a \textbf{finite number} of classifiers (Corollary \ref{cor:finite-hmv}). For the other, we offer a tighter-than-ever bound under an additional condition on the \textbf{entropy} of the ensemble (Theorem \ref{thm:pipj-restricted}). We provide empirical results that demonstrate our new bounds perform significantly better than the existing bounds on the majority vote test error (Figure~\ref{fig:otherbounds}). 

\item The \textbf{asymptotic behavior of the majority vote error rate} is determined as the number of classifiers increases (Theorem \ref{thm:disg}). Consequently, we show that we can predict the performance for a larger number of classifiers from that of a smaller number. We provide empirical results that show such predictions are considerably accurate across various pairs of model architecture and dataset (Figure \ref{fig:tight}). 
\end{enumerate}

In Section \ref{sec:background}, we define the notations that will be used throughout the paper, and we introduce upper bounds for the error rate of the majority vote from previous work. 
The next three sections are the main part of the paper. 
In Section \ref{sec:polarization}, we introduce the notion of polarization, $\eta_\rho$, which plays a fundamental role in relating the majority vote error rate to average error rate and disagreement. We explore the properties of the polarization and present empirical results that corroborate our claims.
In Section \ref{sec:confident}, we present tight upper bounds for the error rate of the majority vote for ensembles that satisfy certain conditions; and in Section~\ref{sec:law}, we prove how disagreement behaves in terms of the number of classifiers. 
All of these ingredients are put together to estimate the error rate of the majority vote for a large number of classifiers using information from only three sampled classifiers. In Section \ref{sxn:conclusion}, we provide a brief discussion and conclusion.
Additional material is presented in the appendices.


\section{Preliminaries}
\label{sec:background}

In this section, we introduce notation that we use throughout the paper, and we summarise previous work on the performance of the majority vote error rate.

\subsection{Notations}
\label{sec:setup}

We focus on $K$-class classification problems, with features $X \in \cx$, labels $Y \in [K] = \{1,2,...,K\}$ and feature-label pairs $(X,Y) \sim \cd$. 
A classifier $h : \cx \to [K]$ is a function that maps a feature to a label. 
We define the error rate of a single classifier $h$, and the disagreement and the tandem loss \cite{second-order-mv-bounds2020} between two classifiers, $h$ and $h'$, as the following: 
\begin{align*}
    \text{Error rate : } & L(h) = \E_{\cd}[\one(h(X) \neq Y)]  \\
    \text{Disagreement : } & D(h, h') = \E_{\cd}[\one(h(X) \neq h'(X))] \\
    \text{Tandem loss : } & L(h, h') = \E_{\cd}[\one(h(X)\neq Y)\one(h'(X)\neq Y)] ,
\end{align*}
where the expectation $\E_\cd$ is used to denote $\E_{(X,Y) \sim \cd}$.
Next, we consider a distribution of classifiers, $\rho$, which may be viewed as an \emph{ensemble} of classifiers. 
This distribution can represent a variety of different cases. 
Examples include: 
(1) a discrete distribution over finite number of $h_i$, e.g., a weighted sum of $h_i$; and 
(2) a distribution over a parametric family $h_\theta$, e.g., a distribution of classifiers resulting from one or multiple trained neural networks. 
Given the ensemble $\rho$, the \emph{(weighted) majority vote} $h_\rho^\mv : \cx \to [K]$ is defined as
\begin{align*}
    h_\rho^\mv(x) = \argmax_{y \in [K]} \, \E_{\rho}[\mathds{1}(h(x)=y)] .
\end{align*}
Again, $\E_{\rho}$ denotes $\E_{h \sim \rho}$, and we use $\E_\rho, \E_{\rho^2}, \P_\rho$ for $\E_{h\sim\rho}, \E_{(h,h')\sim \rho^2}, \P_{h\sim\rho}$, respectively, throughout the paper. 
In this sense, $\E_{\rho}[L(h)]$ represents \emph{the average error rate} under a distribution of classifiers $\rho$ and $\E_{\rho^2}[D(h, h')]$ represents \emph{the average disagreement} between classifiers under~$\rho$.  
Hereafter, we refer to $\E_{\rho}[L(h)]$, $\E_{\rho^2}[D(h, h')]$, and $L(\hmv)$ as the \textbf{average error rate}, the \textbf{disagreement}, and the  \textbf{majority vote error rate}, respectively, with
\begin{align*}
    L(h_\rho^\mv) = \E_{\cd}[\one(h_\rho^\mv(X) \neq Y)].
\end{align*}
Lastly, we define the point-wise error rate, $\wr(X,Y)$, which will serve a very important role in this paper (for clarity, we will denote $\wr(X,Y)$ by $\wr$ unless otherwise necessary): 
\begin{align}\label{eq:wrho}
    W_\rho(X,Y) = \E_{\rho}[\one(h(X) \neq Y)] .
\end{align}

\subsection{Bounds on the majority vote error rate}
\label{sec:existing-bounds}
The simplest relationship between the majority vote error $L(h_\rho^\mv)$ and the average error rate $\elh$ was introduced in \cite{mcallester1998some}. It states that the error in the majority vote classifier cannot exceed twice the average error rate:
\begin{align} \label{ieq:first_order}
    L(h_\rho^\mv) \leq 2\elh
\end{align}
A simple proof for this relationship can be found in \cite{second-order-mv-bounds2020} using Markov's inequality. Although (\ref{ieq:first_order}) does not provide useful information in practice, it is worth noting that this bound is, in fact, tight. 
There exist pathological examples where $\hmv$ exhibits twice the average error rate (see Appendix C in \cite{theisen2023}). 
This suggests that we can hardly obtain a useful or tighter bound by relying on only the ``first-order'' term, $\elh$. 

Accordingly, more recent work constructed bounds in terms of ``second-order'' quantities, $\ellh$ and $\edh$. 
In particular, \cite{LAVIOLETTE201715} and \cite{second-order-mv-bounds2020} designed a so-called \emph{C-bound} using the Chebyshev-Cantelli inequality, establishing that, if $\elh < 1/2$, then  
\begin{align} \label{ieq:cbound}
    L(\hmv) \leq \frac{\ellh - \elh^2}{\ellh - \elh + \frac{1}{4}}.
\end{align}

As an alternative approach, \cite{second-order-mv-bounds2020} incorporated the disagreement $\edh$ into the bound as well, albeit restricted to the binary classification problem, to obtain:
\begin{align} \label{ieq:second_order}
    L(\hmv) \leq 4\elh - 2\edh .
\end{align}
While \eqref{ieq:cbound} and \eqref{ieq:second_order} may be tighter in some cases, once again, there do exist pathological examples where this bound is as uninformative as the first-order bound \eqref{ieq:first_order}. Motivated by these weak results, \cite{theisen2023} take a new approach by restricting $\rho$ to be a ``good ensemble,'' and introducing the \emph{competence} condition (see Definition \ref{def:competence} in our Appendix \ref{app:more-on-competence}). 
Informally, competent ensembles are those where it is more likely---in average across the data---that more classifiers are correct than not. Based on this notion, \cite{theisen2023} prove that competent ensembles are guaranteed to have weighted majority vote error \emph{smaller} than the weighted average error of individual classifiers:
\begin{align} \label{ieq:always-better}
    L(\hmv) \leq \elh .
\end{align}
That is, the majority vote classifier is always beneficial.
Moreover, \cite{theisen2023} proves that any competent ensemble $\rho$ of $K$-class classifiers satisfy the following inequality. 
\begin{align} \label{ieq:ryan-second-order-bound}
    L(\hmv) \leq \frac{4(K-1)}{K}\left(\elh - \frac{1}{2}\edh\right).
\end{align}
We defer further discussion of competence to Appendix \ref{app:more-on-competence}, where we introduce simple cases for which competence does \emph{not} hold. In these cases, we show how one can overcome this issue so that the bounds \eqref{ieq:always-better} and \eqref{ieq:ryan-second-order-bound} still hold. 
In particular, in Appendix \ref{app:ryan-bound-tight}, we provide an example to show the bound \eqref{ieq:ryan-second-order-bound} is tight.


\section{The Polarization of an Ensemble} 
\label{sec:polarization}

In this section, 
we introduce a new quantity, $\eta_\rho$, which we refer to as the \emph{polarization} of an ensemble $\rho$. First, we provide examples as to what this quantity represents and draw a connection to previous studies. Then, we present theoretical and empirical results that show this quantity plays a fundamental role in relating the majority vote error rate to average error rate and disagreement. In Theorem \ref{thm:polarization4/3}, we prove an upper bound for the polarization $\eta_\rho$, which highlights a fundamental relationship between the polarization and the constant $\frac{4}{3}$. Inspired from the theorem, we propose Conjecture \ref{conj:polarization4/3} which we call a \emph{neural polarization law}. Figures \ref{fig:polarity} and \ref{fig:polarity2} present empirical results on an image recognition task that corroborates the conjecture. 

We start by defining the polarization of an ensemble. In essence, the polarization is an improved (smaller) coefficient on the Markov's inequality on $\prd(\wr > 0.5)$, where $\wr$ is the point-wise error rate defined as equation \eqref{eq:wrho}. It measures how much the ensemble is ``polarized'' from the truth, with consideration of the distribution of $\wr$.
\begin{definition}[\textsc{Polarization}]
    An ensemble $\rho$ is $\eta$-polarized if
    \begin{align} \label{ieq:polarization}
        \eta\,\exd[\wr^2] \geq \prd(\wr > 1/2).
    \end{align}
    The \textbf{polarization} of an ensemble $\rho$ is 
    \begin{align}
        \eta_\rho := \frac{\prd(\wr>1/2)}{\exd[\wr^2]},
    \end{align} 
    which is the smallest value of $\eta$ satisfies inequality \eqref{ieq:polarization}.  
\end{definition}
Note that the polarization always takes a value in $[0,4]$, due to the positivity constraint and Markov's inequality. Also note that ensemble $\rho$ with polarization $\eta_\rho$ is $\eta$-polarized for any $\eta \geq \eta_\rho$. 

To understand better what this quantity represents, consider the following examples. The first example demonstrates that polarization increases as the majority vote becomes more polarized from the truth, while the second example demonstrates how polarization increases when the constituent classifiers are more evenly split.

\paragraph{Example 1.} Consider an ensemble $\rho$ where 75\% of classifiers output Label 1 with probability one, and the other 25\% classifiers output Label 2 with probability one. 
\begin{itemize}[leftmargin=*]
    \item[-] \textbf{Case 1.} \emph{The true label is Label 1 for the whole data.} \\
    In this case, the majority vote in $\rho$ results in zero error rate. The point-wise error rate $\wr$ is $0.25$ on the entire dataset, and thus $\prd(\wr>0.5) = 0$. The polarization $\eta_\rho$ is $0$.
    \item[-] \textbf{Case 2.} \emph{The true label is Label 1 for half of the data and is Label 2 for the other half.}\\
    In this case, the majority vote is only correct for half of the data. The point-wise error rate $\wr$ is $0.25$ for this half, and is $0.75$ for the other half. The polarization $\eta_\rho$ is $0.5/0.3125 = 1.6$.
    \item[-] \textbf{Case 3.} \emph{The true label is Label 2 for the whole data.} \\
    In this case, the majority vote in $\rho$ is wrong on every data point. The point-wise error rate $\wr$ is $0.75$ on the entire dataset and thus $\prd(\wr>0.5)=1$. The polarization $\eta_\rho$ is $1/0.3125 =3.2$.
\end{itemize}

\paragraph{Example 2.} Now consider an ensemble $\rho$ of which 51\% of classifiers always output Label 1, and the other 49\% classifiers always output Label 2. 
\begin{itemize}[leftmargin=*]
    \item[-] \textbf{Case 1.} The polarization $\eta_\rho$ is now $0$, the same as in Example 1.
    \item[-] \textbf{Case 2.} The polarization $\eta_\rho$ is $0.5/0.2501 \approx 2$, which is larger than $1.6$ in Example 1.
    \item[-] \textbf{Case 3.} The polarization $\eta_\rho$ is now $1/0.2501 \approx 4$ , which is larger than $3.2$ in Example 1.
\end{itemize}

In addition, the following proposition draws a connection between polarization and the competence condition mentioned in Section \ref{sec:existing-bounds}. It states that the polarization of competent ensembles cannot be very large. The proof is deferred to Appendix \ref{proof:semi_competence1}.
\begin{proposition} \label{cor:semi-and-first}
    Competent ensembles are $2$-polarized.
\end{proposition}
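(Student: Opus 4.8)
The plan is to establish the equivalent inequality $\prhofive \le 2\,\exd[\wr^2]$, which is exactly the assertion $\eta_\rho \le 2$. Write $\mu$ for the law of the point-wise error rate $\wr$ under $\cd$, a probability measure on $[0,1]$, and set $p := \prhofive = \mu((1/2,1])$. The first step is to restate the competence condition (Definition~\ref{def:competence}) in a form tailored to $\eta_\rho$: it says that for every $t\in[0,1/2]$ one has $\mu([t,1/2)) \ge \mu((1/2,1-t])$. Reparametrizing by the distance $a = 1/2 - t$ from the decision threshold $1/2$, this is equivalent to
\begin{align*}
\ell(a) := \mu\!\left(\left[\tfrac12 - a,\tfrac12\right)\right) \;\ge\; g(a) := \mu\!\left(\left(\tfrac12,\tfrac12 + a\right]\right)\qquad\text{for all }a\in[0,1/2],
\end{align*}
i.e. band-by-band there is at least as much mass just below $1/2$ as just above it.

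The conceptual heart of the argument is then a pairing: since $\ell(a)\ge g(a)$ for every $a$, the ``distance from $1/2$'' of the lower mass stochastically dominates that of the upper mass, so the upper mass $\mu|_{(1/2,1]}$ can be coupled into the lower mass so that each unit at $w = 1/2 + a$ is matched to a unit at $w' = 1/2 - b$ with $b \le a$. Because $0 \le 1/2 - a \le 1/2 - b$, such a matched pair contributes to $\exd[\wr^2]$ at least $(1/2 + a)^2 + (1/2 - b)^2 \ge (1/2+a)^2 + (1/2-a)^2 = 1/2 + 2a^2 \ge 1/2$. Integrating over the full upper mass $p$ and discarding the (nonnegative) unmatched lower mass and any atom at $1/2$ yields $\exd[\wr^2]\ge \tfrac12 p$, which is the claim.

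To make this rigorous without explicitly constructing the coupling, I would run the same estimate through Riemann--Stieltjes integration by parts, using $g(0)=\ell(0)=0$ and $g(1/2)=p$:
\begin{align*}
\exd[\wr^2] &\ge \int_0^{1/2}\!\left(\tfrac12+a\right)^2 dg(a) + \int_0^{1/2}\!\left(\tfrac12-a\right)^2 d\ell(a) \\
&= p - 2\int_0^{1/2}\!\left(\tfrac12+a\right)g(a)\,da + 2\int_0^{1/2}\!\left(\tfrac12-a\right)\ell(a)\,da \\
&\ge p - 4\int_0^{1/2}\! a\,g(a)\,da \;\ge\; p - 4p\int_0^{1/2}\! a\,da \;=\; \tfrac12 p,
\end{align*}
where the second inequality uses competence $\ell \ge g$ together with $1/2 - a \ge 0$, and the third uses the trivial bound $g(a)\le g(1/2)=p$.

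The main obstacle I expect is entirely at the level of rigor rather than ideas: one must translate the competence definition into the clean band form $\ell\ge g$ (checking that the half-open intervals and the exclusion of the threshold $1/2$ line up correctly), and then handle the boundary behavior carefully --- justifying the Stieltjes integration by parts, controlling the endpoint terms, and verifying that a possible atom of $\mu$ at $w=1/2$ only helps. Equivalently, if one prefers the coupling presentation, the obstacle is invoking the standard fact that pointwise domination of the two ``distance'' distribution functions produces a monotone coupling with $b\le a$. Neither difficulty is deep, so once the band reformulation is in hand the bound $\eta_\rho \le 2$ follows directly.
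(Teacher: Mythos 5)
Your proof is correct and takes essentially the same route as the paper's: your band-by-band domination of the upper mass by the lower mass is exactly the stochastic-dominance statement of Lemma~\ref{lemma:stoch_dominant} (which the paper proves via the layer-cake identity $\E X=\int_0^\infty\P(X\ge x)\,\de x$, the same content as your Riemann--Stieltjes integration by parts), and your reflection pairing $(\tfrac12+a)^2+(\tfrac12-a)^2\ge\tfrac12$ is that lemma applied with the increasing function $x\mapsto 2x^2$, yielding the chain \eqref{eq:cor:second-order-competence} that the paper cites to prove this proposition. The differences are purely presentational, so nothing further is needed.
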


Now we delve more into this new quantity. We introduce Theorem \ref{thm:polarization4/3}, which establishes (by means of concentration inequalities) an upper bound on the polarization $\eta_\rho$. The proof of Theorem \ref{thm:polarization4/3} is deferred to Appendix \ref{proof:polarity4/3}. 
\begin{theorem} \label{thm:polarization4/3}
Let $\{(X_{i},Y_{i})\}_{i=1}^{m}$ be independent and identically distributed samples from $\mathcal{D}$ that are independent of an ensemble $\rho$. Then the polarization of the ensemble, $\eta_\rho$, satisfies
\begin{align}\label{eq:eta_thm}
\eta_\rho \leq \max\left\{ \frac{4}{3},\left(\frac{\sqrt{\frac{3}{8m}\log\frac{1}{\delta}}+\sqrt{\frac{3}{8m}\log\frac{1}{\delta}+4SP}}{2S}\right)^{2}\right\}, 
\end{align}
with probability at least $1-\delta$, where $S=\frac{1}{m}\sum_{i=1}^{m}W_{\rho}^{2}(X_{i},Y_{i})$ and $P=\frac{1}{m}\one(W_{\rho}(X_{i},Y_{i})>1/2)$.
\end{theorem}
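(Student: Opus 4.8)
The plan is to read the claimed right-hand side as a high-probability \emph{upper confidence bound} for the ratio $\eta_\rho = \prd(\wr>1/2)/\exd[\wr^2]$. Conditioning on $\rho$ (the samples are independent of it), write $p=\prd(\wr>1/2)$, $s=\exd[\wr^2]$, and $W_i := \wr(X_i,Y_i)$, so that $P=\frac1m\sum_i\one(W_i>1/2)$ and $S=\frac1m\sum_i W_i^2$ are unbiased for $p$ and $s$. Since the bound is a maximum with $4/3$, the case $\eta_\rho\le 4/3$ is trivial, and I only need to handle $\eta_\rho>4/3$ and show that, with probability at least $1-\delta$, $\eta_\rho$ does not exceed the second term $E:=\left(\frac{b+\sqrt{b^2+4SP}}{2S}\right)^2$, where $b:=\sqrt{\frac{3}{8m}\log\frac1\delta}$.

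First I would recast $E$ as a root of a quadratic in $\sqrt\eta$. By construction $\sqrt E$ is the positive root of $St^2-bt-P=0$, so $E$ is the unique zero of $\phi(\eta)=S\eta-b\sqrt\eta-P$, which is strictly increasing once $\eta>\left(\frac{b}{2S}\right)^2$ (negligibly small here). Hence, in the regime $\eta_\rho>4/3$, the inequality $\eta_\rho\le E$ is equivalent to $\phi(\eta_\rho)\le 0$, i.e. to $S\eta_\rho-P\le b\sqrt{\eta_\rho}$. The gain from this reparametrization is that the left-hand side is a centered empirical mean: setting $Z_i=\one(W_i>1/2)-\eta_\rho W_i^2$, one has $S\eta_\rho-P=-\frac1m\sum_i Z_i$, and crucially $\E[Z_i]=p-\eta_\rho s=0$ by the very definition of $\eta_\rho$. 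So the whole claim reduces to a one-sided deviation bound for bounded i.i.d. centered variables: $\P\!\left(\frac1m\sum_i Z_i<-b\sqrt{\eta_\rho}\right)\le\delta$.

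Next I would apply a variance-sensitive (Bernstein/Bennett) lower-tail inequality to the $Z_i$, which are bounded because $W_i\in[0,1]$. The leading exponent at deviation $u$ is $\exp\!\left(-\frac{mu^2}{2\sigma^2}\right)$ with $\sigma^2=\Var(Z_i)$; substituting $u=b\sqrt{\eta_\rho}$ together with $\sigma^2=\frac{3}{16}\eta_\rho$ collapses the exponent to exactly $-\log\frac1\delta$, which is precisely what produces both the constant $\frac{3}{8m}$ inside $b$ and the $\sqrt{\eta_\rho}$ scaling. Thus the theorem hinges on the second-moment estimate $\Var(Z_i)\le\frac{3}{16}\eta_\rho$, which I would attempt pointwise-then-average from $\E[Z_i^2]=p-2\eta_\rho\,\exd[\wr^2\one(\wr>1/2)]+\eta_\rho^2\,\exd[\wr^4]$, using $\wr\le1$, the implication $\wr>1/2\Rightarrow\wr^2>1/4$, and the identity $p=\eta_\rho s$.

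I expect this variance estimate to be the main obstacle, and to be exactly where the constant $4/3$ genuinely enters. The bound $\Var(Z_i)\le\frac{3}{16}\eta_\rho$ is most transparent near $\eta_\rho=4/3$ (there the extremal two-point configurations of $\wr$ give variance at most $\tfrac29\le\tfrac14$), and the regime split in the $\max$ is what lets one simply discard the small-$\eta_\rho$ case rather than certify it. The remaining items are routine and I would settle them last: checking that the lower-order range term in Bernstein is dominated and can be absorbed, and verifying the monotonicity of $\phi$ so that the passage from $\phi(\eta_\rho)\le0$ to $\eta_\rho\le E$ is valid.
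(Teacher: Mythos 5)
Your reduction (conditioning on $\rho$, splitting on whether $\eta_\rho\le 4/3$, rewriting $\eta_\rho\le E$ as $S\eta_\rho-P\le b\sqrt{\eta_\rho}$ via the quadratic in $\sqrt{\eta}$, and noting that $Z_i=\one(W_i>1/2)-\eta_\rho W_i^2$ is exactly centered because $p=\eta_\rho s$) is correct, and it is structurally the same route as the paper's proof, which also centers $\eta S-P$, applies a concentration inequality, and inverts the quadratic $St^2-bt-P$; if anything, working at the fixed value $\eta=\eta_\rho$ is cleaner on the quantifiers than the paper's data-dependent $\eta$. The genuine gap is exactly the step you flagged as the main obstacle: the variance estimate $\Var(Z_i)\le\frac{3}{16}\eta_\rho$ is \emph{false} in the regime $\eta_\rho>4/3$. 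Take two data points with $\cd$-masses $1/3$ and $2/3$, with every classifier wrong on the first and a $(\frac12+\epsilon)$-fraction wrong on the second, so that $\wr=1$ with probability $1/3$ and $\wr=\frac12+\epsilon$ with probability $2/3$. Then $p=\prd(\wr>1/2)=1$, $s=\exd[\wr^2]\to\frac12$, hence $\eta_\rho\to 2>\frac43$, while $Z_i$ tends to the variable taking value $-1$ with probability $1/3$ and $+\frac12$ with probability $2/3$, so $\Var(Z_i)\to\frac12>\frac38=\frac{3}{16}\eta_\rho$. (Pushing this family further, the split $1/5$ versus $4/5$ gives $\eta_\rho\to\frac52$ and $\Var(Z_i)\to\frac{9}{16}$ against an allowance of $\frac{15}{32}$; in general this family has $\Var(Z_i)\to\frac{(\eta_\rho-1)(4-\eta_\rho)}{4}$, which exceeds $\frac{3}{16}\eta_\rho$ for all $\eta_\rho\in\bigl(\frac{17-\sqrt{33}}{8},\frac{17+\sqrt{33}}{8}\bigr)$.)

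Worse, this is not a defect of Bernstein that a sharper inequality could repair: in the same example the deviation statement you reduced the theorem to, $\P\bigl(\frac1m\sum_iZ_i<-b\sqrt{\eta_\rho}\bigr)\le\delta$ with $b=\sqrt{\frac{3}{8m}\log\frac1\delta}$, is itself false, since by the central limit theorem the left-hand side converges (as $m\to\infty$) to a Gaussian tail with exponent $\frac{mb^2\eta_\rho}{2\Var(Z_i)}=\frac34\log\frac1\delta$, i.e.\ it is of order $\delta^{3/4}$, which exceeds $\delta$ once $\delta$ is small. So no variance argument can deliver the stated constants, because the statement being targeted fails for this ensemble. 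You should know that the paper's own proof does not survive the same test: its concentration lemma asserts a Hoeffding deviation of $\sqrt{\frac{\max\{3\eta/4,1\}}{2m}\log\frac1\delta}$ for summands whose range has width $\max\{3\eta/4,1\}$, whereas Hoeffding actually gives $\max\{3\eta/4,1\}\sqrt{\frac{1}{2m}\log\frac1\delta}$ (the range enters squared under the root); the claimed form understates the deviation precisely when $\eta>4/3$, and the corrected deviation, being linear rather than square-root in $\eta$, no longer produces the quadratic in $\sqrt{\eta}$ and hence not the bound \eqref{eq:eta_thm}. In short, your diagnosis of where the constant $4/3$ must enter was exactly right, but the conclusion is negative: the inequality with constant $\frac{3}{8m}$ cannot be proved because it is not true as stated, and a correct version requires either the larger range-based deviation (yielding a different, weaker functional form) or an additional hypothesis on $\rho$ excluding configurations like the one above.
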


Surprisingly, in practice, $\eta_\rho = \frac{4}{3}$ appears to be a good choice for a wide variety of cases. See Figure \ref{fig:polarity} and Figure \ref{fig:polarity2}, which show the polarization $\eta_\rho$ obtained from VGG11 \cite{vgg}, DenseNet40 \cite{densenet}, ResNet18, ResNet50 and ResNet101 \cite{res18} trained on CIFAR-10 \cite{cifar10} with various hyperparameters choices. The trend does not deviate even when evaluated on an out-of-distribution dataset, CIFAR-10.1 \cite{recht2018cifar10.1,torralba2008cifar10.1}. 
For more details on these empirical results, see Appendix~\ref{app:exp}.

\begin{figure}[]
    \centering
    \includegraphics[width=1\linewidth]{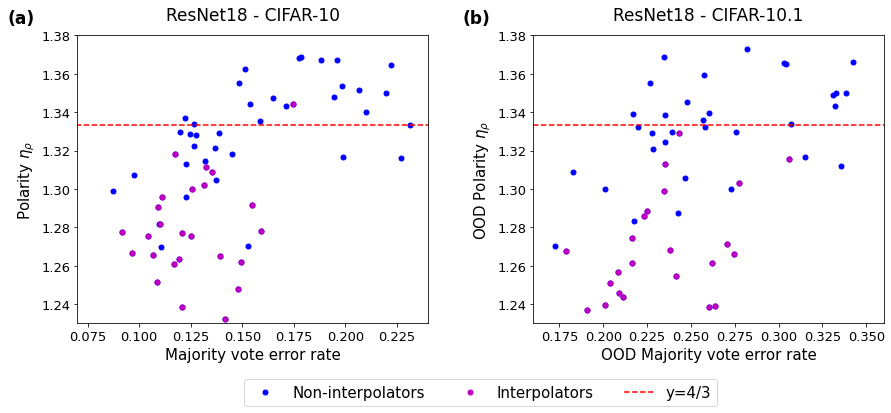}
    \caption{Polarizations $\eta_\rho$ obtained from ResNet18 trained on CIFAR-10 with various sets of hyper-parameters tested on \textbf{(a)} an out-of-sample CIFAR-10 and \textbf{(b)} an out-of-distribution dataset, CIFAR-10.1. Red dashed line indicates $y=4/3$, a suggested value of polarization appears in Theorem \ref{thm:polarization4/3} and Conjecture \ref{conj:polarization4/3}.}
    \label{fig:polarity}
\end{figure}

\begin{figure}[]
    \centering
    \includegraphics[width=\linewidth]{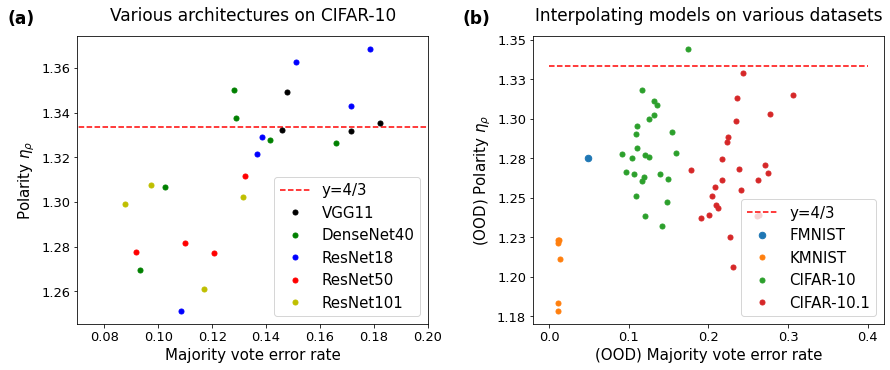}
    \caption{Polarization $\eta_\rho$ obtained \textbf{(a)} from various architectures trained on CIFAR-10 and \textbf{(b)} only from interpolating classifiers trained on various datasets. Red dashed line indicates $y=4/3$. In subplot \textbf{(b)}, we observe that the polarization of all interpolating models expect one are smaller than $4/3$, which aligns with Conjecture \ref{conj:polarization4/3}.}
    \label{fig:polarity2}
\end{figure}

\paragraph{Remark.} 
We emphasize that values for $\eta_\rho$ that are larger than $\frac{4}{3}$ does \emph{not} contradict Theorem \ref{thm:polarization4/3}. 
This happens when the non-constant second term in \eqref{eq:eta_thm} is larger than $\frac{4}{3}$, which is often the case for classifiers which are not interpolating (or, indeed, that underfit or perform poorly). 

\begin{definition}[\textsc{Interpolating}, \cite{belkin2019reconciling}]\label{def:interpolating}
    A classifier is \textbf{interpolating} if it achieves an accuracy of 100\% on the training data.
\end{definition}

Putting Theorem \ref{thm:polarization4/3} and the consistent empirical trend shown in Figure \ref{fig:polarity2}\hyperref[fig:polarity2]{(b)} together, we propose the following conjecture.
\begin{conjecture}[\textsc{Neural Polarization Law}]\label{conj:polarization4/3}
The polarization of ensembles comprised of independently trained interpolating neural networks is smaller than $\frac{4}{3}$.
\end{conjecture}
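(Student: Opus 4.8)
The plan is to reduce the conjecture to a single population inequality and then to isolate the one genuinely network-specific ingredient it leaves behind. Writing $a=\prd(\wr>1/2)$ and $b=\exd[\wr^2]$, and assuming $b>0$ (the ensemble is not perfect), the definition gives $\eta_\rho=a/b$, so the conjecture $\eta_\rho<\frac43$ is \emph{exactly equivalent} to $\exd[g(\wr)]<0$, where I introduce the test function $g(w)=\one(w>1/2)-\frac43 w^2$ on $[0,1]$. First I would record the elementary sign analysis of $g$: on $[0,1/2]$ one has $g(w)=-\frac43 w^2\le 0$, while on $(1/2,1]$ one has $g(w)=1-\frac43 w^2$, which is positive precisely on the \emph{middle band} $(1/2,\sqrt3/2)$ (with supremum $2/3$ as $w\downarrow 1/2$) and negative on $(\sqrt3/2,1]$ (reaching $-1/3$ at $w=1$). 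This clarifies why $\frac43$ is the distinguished constant in both Theorem~\ref{thm:polarization4/3} and the conjecture: pointwise $\one(w>1/2)\le c\,w^2$ forces the Markov constant $c=4$ (it fails as $w\downarrow 1/2$), so the improvement to $\frac43$ can only ever hold \emph{on average}, and $\frac43$ is exactly the constant Theorem~\ref{thm:polarization4/3} isolates when converting the empirical surrogates $S,P$ into a high-probability bound.

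With this reformulation, proving the conjecture means showing that, for an ensemble of independently trained interpolating networks, the positive contribution of the middle band is outweighed by the negative contributions of the confidently-correct region $[0,1/2]$ and the confidently-wrong region $(\sqrt3/2,1]$. A clean sufficient condition is a \textbf{polarization gap}: the law of $\wr$ places no mass on $(1/2,\sqrt3/2)$, i.e. on each data point either a majority of the ensemble is correct or an overwhelming ($\ge\sqrt3/2$) fraction is wrong; then $g(\wr)\le 0$ pointwise and the conjecture is immediate. Since exact gaps are unrealistic, the quantitative target I would aim for is the balancing inequality $\exd[(g(\wr))_+]\le\exd[(g(\wr))_-]$, i.e. the middle-band mass weighted by at most $\tfrac23$ is dominated by $\frac43\exd[\wr^2\one(\wr\le1/2)]+\exd[(\tfrac43\wr^2-1)\one(\wr>\sqrt3/2)]$. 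The hypothesis ``independently trained'' enters here: independence makes $\wr(X,Y)=\P_\rho(h(X)\ne Y)$ a genuine per-point error \emph{probability} with no adversarial correlations, ruling out the degenerate $\wr\equiv 1/2$ configuration (the $51/49$ split of Example 2) that saturates Markov's bound at $\eta_\rho=4$; where competence is available I would additionally invoke Proposition~\ref{cor:semi-and-first} to restrict to the regime $\eta_\rho\le 2$.

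The hard part — and the reason the statement is a conjecture rather than a theorem — is establishing the middle-band bound from first principles, since interpolation and independence alone do not force it, and since Theorem~\ref{thm:polarization4/3} cannot supply it: its data-dependent term tends to $P/S\to\eta_\rho$ as $m\to\infty$, so the bound degenerates to the tautology $\eta_\rho\le\max\{\frac43,\eta_\rho\}$ and certifies the conjecture only on data where $\eta_\rho$ already sits well below $\frac43$. My proposal is therefore to prove the balancing inequality inside a tractable surrogate for independently trained interpolating networks — an infinite-width NTK / random-feature model, or a Gaussian-universality model of the logits — in which the per-point error probability $q(X)=\P_\rho(h(X)\ne Y)$ induced by random initialization admits an explicit (typically Gaussian-margin) form. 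Within such a model I would (i) derive the pushforward law of $q(X)$ under $(X,Y)\sim\cd$, (ii) show that interpolating the training set renders the margin distribution bimodal, pushing the misclassified mass past $\sqrt3/2$ rather than letting it stall near $1/2$, and (iii) verify the balancing inequality for the resulting law. I expect step (ii) — quantifying \emph{why} the misclassified mass lands in the confidently-wrong band — to be the main obstacle; it likely requires an anti-concentration estimate for the per-point margin together with the constraint that the average test error is moderate, and it is precisely this step for which the empirical evidence of Figures~\ref{fig:polarity} and~\ref{fig:polarity2} stands in throughout the present paper.
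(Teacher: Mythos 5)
There is no proof in the paper to compare against: the statement you were given is Conjecture~\ref{conj:polarization4/3}, which the paper explicitly leaves open (Section~\ref{sxn:conclusion} calls it ``yet to be proved or disproved''), supporting it only with Theorem~\ref{thm:polarization4/3} and the empirical evidence of Figures~\ref{fig:polarity} and~\ref{fig:polarity2}. Your reformulation is correct and in places sharper than the paper's own discussion: the equivalence $\eta_\rho<\frac43 \iff \exd[g(\wr)]<0$ with $g(w)=\one(w>1/2)-\frac43w^2$ is right, your sign analysis of $g$ (positive exactly on the band $(1/2,\sqrt3/2)$, pointwise Markov constant forced to $4$) is right, and your observation that the data-dependent term in \eqref{eq:eta_thm} satisfies $P/S\to\eta_\rho$ as $m\to\infty$ --- so that Theorem~\ref{thm:polarization4/3} asymptotically degenerates to the tautology $\eta_\rho\le\max\{\frac43,\eta_\rho\}$ and cannot by itself establish the conjecture --- is accurate and is a useful point the paper does not make explicit.

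As a proof attempt, however, there is a genuine gap, which you partly concede. Your ``balancing inequality'' $\exd[(g(\wr))_+]\le\exd[(g(\wr))_-]$ is not a reduction at all: it is verbatim the conjecture (the decomposition of $\exd[g(\wr)]\le 0$ into positive and negative parts), so all the mathematical content sits in your unexecuted steps (i)--(iii). Moreover, the one substantive claim you do assert --- that independent training ``rules out'' mass of $\wr$ near $1/2$ --- is wrong as stated: $\wr(x,y)=\P_\rho(h(x)\neq y)$ is a functional of the population distribution $\rho$ alone, and independence of the draws $h_i\sim\rho$ says nothing about where that function sits; genuinely ambiguous test points are precisely where $\wr\approx 1/2$, and neither interpolation (a training-set property) nor independence controls the test-point margin distribution. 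Finally, note that a distribution-free proof of the conjecture as you target it is impossible: the paper's own Figure~\ref{fig:polarity2}(b) reports an interpolating ensemble with $\eta_\rho>\frac43$ (the introduction says ``most,'' not all), so any rigorous version must be model-dependent or typical-case --- consistent with your NTK/random-feature plan, but confirming that your step (ii), the bimodality/anti-concentration estimate pushing misclassified mass past $\sqrt3/2$, is the entire open problem rather than a technical lemma.
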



\section{Entropy-Restricted Ensembles}
\label{sec:confident}

In this section, we first present an upper bound on the majority vote error rate, $L(\hmv)$, in Theorem~\ref{thm:second-order-competence}, using our notion of polarization $\eta_\rho$ which we introduced and defined in the previous section. Then, we present Theorems \ref{thm:entropy-restricted} and \ref{thm:pipj-restricted} which are the main elements in obtaining tighter upper bounds on $L(\hmv)$.  Figure \ref{fig:otherbounds} shows our proposed bound offers a significant improvement over state-of-the-art results.
The new upper bounds are inspired from the fact that classifier prediction probabilities tend to concentrate on a small number of labels, rather than be uniformly spread over all the possible labels. 
This is analogous to the phenomenon of \emph{neural collapse} \cite{kothapalli2022neural}. 
As an example, in the context of a computer vision model, when presented with a photo of a dog, one might expect that a large portion of reasonable models might classify the photo as an animal other than a dog, but not as a car or an~airplane. 

We start by stating an upper bound on the majority vote error, $L(\hmv)$ as a function of polarization $\eta_\rho$. This upper bound is tighter (smaller) than the previous bound in inequality \eqref{ieq:ryan-second-order-bound} when the polarization is lower than $2$, which is the case for competent ensembles. The proof is deferred to Appendix~\ref{proof:second-order-competence}.

\begin{theorem} \label{thm:second-order-competence}
    For an ensemble $\rho$ of $K$-class classifiers,
    \begin{align*}
        L(\hmv) \leq \frac{2\eta_\rho(K-1)}{K}\left(\elh - \frac{1}{2}\edh\right),        
    \end{align*}     
    where $\eta_\rho$ is the polarization of the ensemble $\rho$.
\end{theorem}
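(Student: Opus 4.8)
The plan is to route the entire argument through the definition of polarization, which gives $\prd(\wr > 1/2) \leq \eta_\rho\, \exd[\wr^2]$. It then suffices to establish two facts: (a) the majority vote errs only where at least half of the ensemble errs, so that $L(\hmv) \leq \prd(\wr > 1/2)$; and (b) the second moment of the point-wise error is controlled by the first- and second-order quantities, namely $\exd[\wr^2] \leq \tfrac{2(K-1)}{K}\big(\elh - \tfrac{1}{2}\edh\big)$. Chaining these three inequalities immediately yields $L(\hmv) \leq \tfrac{2\eta_\rho(K-1)}{K}\big(\elh - \tfrac{1}{2}\edh\big)$.

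For (a), I would fix a point $(X,Y)$ and write $p_y = \E_\rho[\one(h(X)=y)]$ for the vote mass on label $y$, so that $\wr = 1 - p_Y$. If $\hmv(X) \neq Y$, the winning label $y^\star = \hmv(X)$ satisfies $p_{y^\star} \geq p_Y$, while $p_{y^\star} + p_Y \leq \sum_y p_y = 1$; together these give $p_Y \leq 1/2$, i.e. $\wr \geq 1/2$. Integrating $\one(\hmv(X)\neq Y) \leq \one(\wr \geq 1/2)$ over $\cd$ gives the claim, up to the boundary event $\{\wr = 1/2\}$, which forces an exact two-way tie and can be absorbed into the strict inequality by a tie-breaking convention (or is $\cd$-null in the generic case). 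I expect this to be a routine but slightly delicate step, precisely because of the $>1/2$ versus $\geq 1/2$ distinction.

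For (b), I would reduce to a pointwise inequality. By Fubini, $\elh = \exd[\wr]$, and since $\E_{\rho^2}[\one(h(X)=h'(X))] = \sum_y p_y^2$ for fixed $X$, we get $\edh = \exd[1 - \sum_y p_y^2]$. Hence the pointwise analogue of the right-hand quantity is $\wr - \tfrac{1}{2}\big(1 - \sum_y p_y^2\big) = \tfrac{1}{2}(1-p_Y)^2 + \tfrac{1}{2}\sum_{y\neq Y} p_y^2$, while the left-hand quantity is $\wr^2 = (1-p_Y)^2$. The only remaining ingredient is the power-mean (Cauchy--Schwarz) bound $\sum_{y\neq Y} p_y^2 \geq (1-p_Y)^2/(K-1)$, reflecting that the wrong-label mass $1-p_Y$ is distributed over at most $K-1$ labels; substituting it shows the pointwise inequality $\wr^2 \leq \tfrac{2(K-1)}{K}\big(\wr - \tfrac{1}{2}(1-\sum_y p_y^2)\big)$ holds, with equality exactly when the wrong mass is uniform. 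Taking $\exd[\cdot]$ then delivers (b).

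I regard step (b), and specifically the Cauchy--Schwarz estimate, as the crux: it is what produces the dimension-dependent factor $\tfrac{2(K-1)}{K}$ and makes the bound tight in the binary case $K=2$, where Cauchy--Schwarz degenerates to an equality. As a sanity check, combining the final bound with Proposition~\ref{cor:semi-and-first} (competent ensembles are $2$-polarized, so $\eta_\rho \leq 2$) recovers the earlier estimate \eqref{ieq:ryan-second-order-bound}, confirming that the new bound is a strict improvement whenever $\eta_\rho < 2$.
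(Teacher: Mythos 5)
Your proof is correct and takes essentially the same route as the paper's: the chain $L(\hmv) \le \prd(\wr > 1/2) \le \eta_\rho\,\exd[\wr^2] \le \frac{2\eta_\rho(K-1)}{K}\left(\elh - \frac{1}{2}\edh\right)$, where your step (a) is the paper's Lemma~\ref{lemma:tie-free-markov} (with the same tie-breaking caveat you flag), and your pointwise Cauchy--Schwarz argument for step (b) is exactly the content of the tandem-loss identity (Lemma~\ref{lemma:tandem}) combined with the $K$-class bound (Lemma~\ref{lemma:kclass_tandem_ub}) that the paper cites from prior work. The only difference is that you re-derive those two borrowed lemmas inline rather than invoking them.
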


Based on the upper bound stated in Theorem \ref{thm:second-order-competence}, we add a restriction on the entropy of constituent classifiers to obtain Theorem~\ref{thm:entropy-restricted}. The theorem provides a tighter scalable bound that does \emph{not} have explicit dependency on the total number of labels, with a small cost in terms of the entropy of constituent classifiers. 
The proof of Theorem \ref{thm:entropy-restricted} is deferred to Appendix \ref{proof:entropy-restricted}.

\begin{theorem} \label{thm:entropy-restricted} 
Let $\rho$ be any $\eta$-polarized ensemble of $K$-class classifiers that satisfies $\P_\rho(h(x) \notin A(x)) \leq \Delta$, where $y \in A(x) \subset [K]$ and $|A(x)| \leq M$, for all data points $(x,y) \in \cd$. 
Then, we have
\begin{align*}
    L(\hmv) \leq \, \frac{2\eta(M\!-\!1)}{M}\left[\left(1+\frac{\Delta}{M\!-\!1}\right)\elh - \frac{1}{2}\edh\right].
\end{align*}
\end{theorem}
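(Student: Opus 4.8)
The plan is to reduce everything to a pointwise inequality on the fibre over each data point $(x,y)$, exactly as in the proof of Theorem~\ref{thm:second-order-competence}, and then to sharpen the pointwise step using the concentration hypothesis. Writing $p_k(x)=\P_\rho(h(x)=k)$ for the ensemble's label probabilities, so that $\wr = 1-p_y(x)$ at the true label $y$ and the pointwise disagreement is $d_\rho(x) = 1-\sum_k p_k(x)^2$ with $\exd[d_\rho]=\edh$, the first two steps are label-structure-free. First I would note that whenever $\hmv(x)\neq y$ some competing label strictly dominates $y$, forcing $p_y(x)<1/2$ and hence $\wr>1/2$; therefore $L(\hmv)\leq\prd(\wr>1/2)$. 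Then $\eta$-polarization gives $\prd(\wr>1/2)\leq\eta\,\exd[\wr^2]$, so it suffices to bound $\exd[\wr^2]$.

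The crux is a pointwise upper bound on $\wr^2$ that improves the generic factor $\tfrac{2(K-1)}{K}$ to $\tfrac{2(M-1)}{M}$ at the cost of the $\Delta$ slack. The linchpin is the algebraic identity
\begin{align*}
\wr - \tfrac12 d_\rho = \tfrac12\wr^2 + \tfrac12\sum_{k\neq y}p_k(x)^2,
\end{align*}
obtained by substituting $\wr=1-p_y$ and $d_\rho=1-\sum_k p_k^2$ and completing the square. So everything reduces to lower-bounding $\sum_{k\neq y}p_k(x)^2$. I would discard the mass falling outside $A(x)$ and retain only the at most $M-1$ wrong labels inside $A(x)\setminus\{y\}$: writing $q(x)=\P_\rho(h(x)\notin A(x))\leq\Delta$, these labels carry total mass $\wr-q(x)$, so Cauchy--Schwarz over at most $M-1$ terms yields $\sum_{k\neq y}p_k^2 \geq (\wr-q)^2/(M-1)$.

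Plugging this in and solving the resulting quadratic inequality for $\wr^2$ gives, after using $(\wr-q)^2\geq\wr^2-2\Delta\wr$ (valid since $0\leq q\leq\Delta$ and $\wr\geq0$) to eliminate the pointwise $q$, the clean pointwise bound $\wr^2 \leq \tfrac{2(M-1)}{M}\big[(1+\tfrac{\Delta}{M-1})\wr - \tfrac12 d_\rho\big]$. Taking $\exd$ and combining with $L(\hmv)\leq\eta\,\exd[\wr^2]$ produces the stated inequality. The main obstacle is the bookkeeping around the escaping mass $q(x)$: the Cauchy--Schwarz step naturally produces $(\wr-q)^2/(M-1)$, and one must verify that replacing $q$ by its global bound $\Delta$ in the direction that weakens the inequality is precisely what converts the $M-1$ interior labels into the multiplicative correction $1+\tfrac{\Delta}{M-1}$ on $\elh$, rather than an uncontrolled additive error. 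A secondary subtlety is the tie-breaking convention underlying $L(\hmv)\leq\prd(\wr>1/2)$, which must deliver strict domination at the boundary points where $p_y=1/2$.
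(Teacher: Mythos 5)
Your proposal is correct, and it reaches the theorem by a noticeably lighter route than the paper, even though the skeleton (Markov-type bound, then polarization, then a pointwise bound on $\exd[\wr^2]$) is the same. Where you differ is the combinatorial core: the paper first rewrites $\exd[\wr^2]$ as the tandem loss (its Lemma~\ref{lemma:tandem}), then decomposes $\tfrac12\prr(h(x)\neq h'(x))$ into the ``one right, one wrong'' term plus three groups of wrong-label cross terms, applies Newton's inequality (Lemma~\ref{lemma:Newton}) separately to the pairs inside $A(x)\setminus\{y\}$ and the pairs outside $A(x)$, keeps the exact cross term $\Delta_x(1-p_y-\Delta_x)$, and finally discards a positive remainder $\tfrac{K-1}{2(K-M)(M-1)}\Delta_x^2$. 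You instead use the single completing-the-square identity $\wr - \tfrac12 d_\rho = \tfrac12\wr^2 + \tfrac12\sum_{k\neq y}p_k^2$ and one Cauchy--Schwarz over the at most $M-1$ wrong labels inside $A(x)$, discarding the escaped mass, followed by the relaxation $(\wr-q)^2\geq \wr^2-2\Delta\wr$. Since Newton's inequality in degree two \emph{is} Cauchy--Schwarz, the ingredients are equivalent, and indeed both arguments land on the identical pointwise inequality $\wr^2 \leq \tfrac{2(M-1)}{M}\bigl[(1+\tfrac{\Delta}{M-1})\wr - \tfrac12 d_\rho\bigr]$ before taking expectations; what your version buys is bookkeeping that never references $K$ at all, so it sidesteps the paper's intermediate term with denominator $K-M$ (which formally requires $K>M$ to parse) and handles the case $A(x)=[K]$ uniformly. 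The tie-breaking subtlety you flag at the end is genuine but is not a gap specific to your argument: the inequality $L(\hmv)\leq\prd(\wr>1/2)$ is exactly the paper's Lemma~\ref{lemma:tie-free-markov}, which assumes a tie-free ensemble, and the paper's own proof of this theorem invokes that lemma under the same unstated convention.
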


While Theorem \ref{thm:entropy-restricted} might provide a tighter bound than prior work, coming up with pairs $(M, \Delta)$ that satisfy the constraint is not an easy task. This is not an issue for a discrete ensemble, however. If $\rho$ is a discrete distribution of $N$ classifiers, then we observe that the assumption of Theorem \ref{thm:entropy-restricted} must always hold with $(M,\Delta) = (N\!+\!1, 0)$. We state this as the following corollary. 

\begin{corollary}[\textsc{Finite Ensemble}] \label{cor:finite-hmv} 
For an ensemble $\rho$ that is a weighted sum of $N$ classifiers, we have
\begin{align}\label{ieq:finite-hmv}
    L(\hmv) \leq \, \frac{2\eta_\rho N}{N\!+\!1}\left(\elh - \frac{1}{2}\edh\right), 
\end{align}
where $\eta_\rho$ is the polarization of the ensemble $\rho$.
\end{corollary}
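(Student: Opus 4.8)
The plan is to derive the corollary as a direct specialization of Theorem~\ref{thm:entropy-restricted}, exploiting the fact that a discrete ensemble of $N$ classifiers can output at most $N$ distinct labels at any single point. Concretely, I would fix an arbitrary data point $(x,y) \in \cd$ and define the set $A(x) := \{h_1(x), \dots, h_N(x)\} \cup \{y\}$, where $h_1, \dots, h_N$ are the $N$ classifiers comprising $\rho$. This choice is engineered precisely to satisfy the three hypotheses of Theorem~\ref{thm:entropy-restricted} with the parameters $(M, \Delta) = (N+1, 0)$.

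The verification proceeds in three short steps. First, $y \in A(x)$ by construction. Second, $|A(x)| \leq N+1$: the set is the union of at most $N$ classifier outputs together with the single label $y$, so even in the worst case where all classifier outputs are distinct and none equals $y$, the cardinality is bounded by $N+1 = M$. Third, every classifier output $h_i(x)$ lies in $A(x)$ by the very definition of $A(x)$, so $\prho(h(x) \notin A(x)) = 0 = \Delta$; this holds irrespective of the weights, since the weight assigned to any classifier whose output escapes $A(x)$ is vacuously zero.

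With the hypotheses confirmed, I would invoke Theorem~\ref{thm:entropy-restricted} with $\eta = \eta_\rho$ (the polarization is, by definition, the smallest admissible value, so $\rho$ is indeed $\eta_\rho$-polarized). Substituting $M = N+1$ and $\Delta = 0$ collapses the entropy penalty: the factor $1 + \frac{\Delta}{M-1}$ reduces to $1$, and the leading coefficient $\frac{2\eta(M-1)}{M}$ becomes $\frac{2\eta_\rho N}{N+1}$, yielding exactly
\begin{align*}
    L(\hmv) \leq \frac{2\eta_\rho N}{N+1}\left(\elh - \frac{1}{2}\edh\right).
\end{align*}

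I expect no serious obstacle, as the argument is essentially a bookkeeping specialization of the preceding theorem. The one point requiring care is the cardinality bound: one must remember to include the true label $y$ explicitly in $A(x)$, because the $N$ classifier outputs need not contain $y$, and this is exactly why the effective alphabet size is $N+1$ rather than $N$. A secondary subtlety is that $A(x)$ is permitted to depend on the point $(x,y)$, so the construction adapts pointwise; the uniform cap $M = N+1$ nonetheless holds simultaneously across the whole dataset because $N$ is fixed.
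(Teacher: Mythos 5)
Your proof is correct and takes essentially the same route as the paper: the paper obtains Corollary~\ref{cor:finite-hmv} by observing that a discrete ensemble of $N$ classifiers satisfies the hypothesis of Theorem~\ref{thm:entropy-restricted} with $(M,\Delta) = (N+1,0)$, which is exactly your specialization. Your explicit witness $A(x) = \{h_1(x),\dots,h_N(x)\}\cup\{y\}$ spells out the detail the paper leaves implicit (in particular why the true label forces the alphabet size to be $N+1$ rather than $N$), but the argument is the same.
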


\begin{figure}[h]
    \centering
    \includegraphics[width=\linewidth]{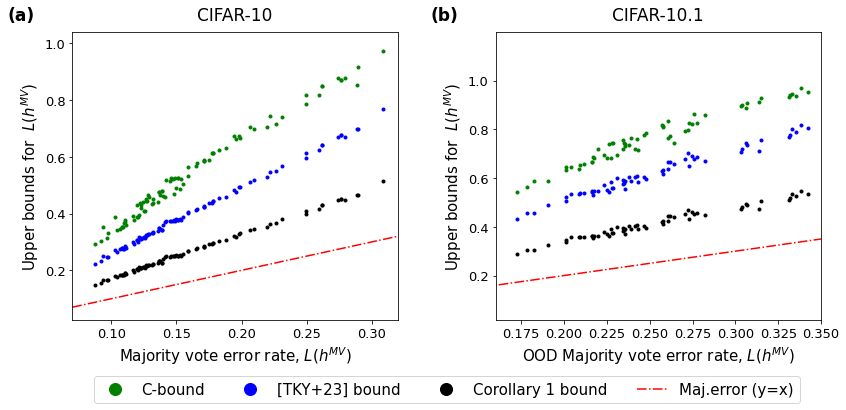}
    \caption{Comparing our new bound from Corollary \ref{cor:finite-hmv} (colored black), which is the right hand side of inequality \eqref{ieq:finite-hmv}, with bounds from previous studies. Green corresponds to the C-bound in inequality \eqref{ieq:cbound}, and blue corresponds to the right hand side of inequality \eqref{ieq:ryan-second-order-bound}. ResNet18, ResNet50, ResNet101 models with various sets of hyperparameters are trained on CIFAR-10 then tested on \textbf{(a)} the out-of-sample CIFAR-10, \textbf{(b)} an out-of-distribution dataset, CIFAR-10.1}.
    \label{fig:otherbounds}
\end{figure}

See Figure \ref{fig:otherbounds}, which provides empirical results that compare the bound in Corollary \ref{cor:finite-hmv} with the C-bound in inequality \eqref{ieq:cbound}, and with inequality \eqref{ieq:ryan-second-order-bound} proposed in \cite{theisen2023}. 
We can observe that the new bound in Corollary \ref{cor:finite-hmv} is strictly tighter than the others. For more details on these empirical results, see Appendix~\ref{app:exp}.

Although the bound in Corollary \ref{cor:finite-hmv} is tighter than the bounds from previous studies, it's still not tight enough to use it as an estimator for $L(\hmv)$. In the following theorem, we use a stronger condition on the entropy of an ensemble to obtain a tighter bound. The proof  is deferred to Appendix \ref{proof:delta}.

\begin{theorem} \label{thm:pipj-restricted} 
For any $\eta$-polarized ensemble $\rho$ that satisfies 
\begin{align} \label{ieq:delta-condition}
    \frac{1}{2}\exd\left[\prr\left(h(X)\neq Y, h'(X)\neq Y, h(X) \neq h'(X)\right)\right]
    \leq \varepsilon\, \exd\left[\prho\left(h(X)\neq Y\right)\right],
\end{align}
we have
\begin{align*}
    L(\hmv) \leq \, \eta\,\left[\left(1+\varepsilon\right)\elh - \frac{1}{2}\edh\right].
\end{align*}
\end{theorem}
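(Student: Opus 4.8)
The plan is to reduce everything to the point-wise vote profile and then reuse the polarization step already established for Theorem~\ref{thm:second-order-competence}. For a fixed point $(x,y)$, write $p_k = \prho(h(x)=k)$ for the fraction of the ensemble voting label $k$, so that $\sum_k p_k = 1$ and $\wr = \wr(x,y) = 1-p_y = \sum_{k\neq y}p_k$. First I would record the two entry points that are shared with Theorem~\ref{thm:second-order-competence}: (i) the majority vote can be wrong at $(x,y)$ only if $p_y \le 1/2$, so that $\{\hmv(X)\neq Y\}\subseteq\{\wr > 1/2\}$ up to ties, giving $L(\hmv)\le\prd(\wr>1/2)$; and (ii) the definition of $\eta$-polarization then yields
\begin{align*}
    L(\hmv) \;\le\; \prd(\wr > 1/2) \;\le\; \eta\,\exd[\wr^2].
\end{align*}
Since this is exactly the machinery behind Theorem~\ref{thm:second-order-competence}, I would simply invoke it. The genuinely new task is to show that the condition \eqref{ieq:delta-condition} upgrades the crude estimate $\exd[\wr^2]\le\tfrac{2(K-1)}{K}(\elh-\tfrac12\edh)$ (which came from Cauchy--Schwarz over the $K-1$ incorrect labels) into the $K$-free bound $\exd[\wr^2] \le (1+\varepsilon)\elh - \tfrac12\edh$.

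Next I would translate all ensemble-level quantities into the profile $p$. The average error is $\elh=\exd[1-p_Y]=\exd[\wr]$, and two independent draws disagree with probability $1-\sum_k p_k^2$, so $\edh=\exd\bigl[1-\sum_k p_k^2\bigr]$. The key algebraic identity I would isolate is the pointwise split
\begin{align*}
    \elh - \tfrac12\edh
    = \exd\!\left[\tfrac12 - p_Y + \tfrac12\textstyle\sum_k p_k^2\right]
    = \tfrac12\,\exd\!\left[\wr^2 + \textstyle\sum_{k\neq Y} p_k^2\right],
\end{align*}
using $1-2p_Y+p_Y^2=(1-p_Y)^2=\wr^2$. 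In parallel I would expand the left side of \eqref{ieq:delta-condition}: the probability that two independent classifiers are both wrong and disagree is $\sum_{j\neq Y}\sum_{k\neq Y,\,k\neq j}p_jp_k=\bigl(\sum_{j\neq Y}p_j\bigr)^2-\sum_{j\neq Y}p_j^2=\wr^2-\sum_{j\neq Y}p_j^2$, so condition \eqref{ieq:delta-condition} reads
\begin{align*}
    \tfrac12\,\exd\!\left[\wr^2 - \textstyle\sum_{k\neq Y}p_k^2\right] \;\le\; \varepsilon\,\elh.
\end{align*}

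The final step is to add the displayed identity to this inequality: the $\sum_{k\neq Y}p_k^2$ terms cancel and the two $\tfrac12\wr^2$ terms combine, giving $\exd[\wr^2] = (\elh-\tfrac12\edh)+\tfrac12\exd[\wr^2-\sum_{k\neq Y}p_k^2] \le (\elh-\tfrac12\edh)+\varepsilon\elh = (1+\varepsilon)\elh-\tfrac12\edh$. Chaining this with $L(\hmv)\le\eta\,\exd[\wr^2]$ from the first paragraph completes the argument. I expect the only delicate point to be step (i) — the passage from $\{\hmv\neq Y\}$ to $\{\wr>1/2\}$ and the boundary case $p_y=1/2$ under the tie-breaking convention of $\argmax$ — but this is precisely the reduction already performed for Theorem~\ref{thm:second-order-competence} and can be imported verbatim. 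The substantive content here is purely the bookkeeping that makes the entropy-type condition \eqref{ieq:delta-condition} cancel exactly the $\textstyle\sum_{k\neq Y}p_k^2$ slack that Cauchy--Schwarz otherwise wastes, which is what removes the dependence on $K$.
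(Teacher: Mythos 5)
Your proposal is correct and is essentially the paper's own argument: both proofs chain $L(\hmv) \le \prd(\wr > 1/2) \le \eta\,\exd[\wr^2]$ (with the same reliance on tie-freeness for the first step) and then show $\exd[\wr^2] \le (1+\varepsilon)\elh - \tfrac12\edh$ by splitting the pointwise disagreement into ``one correct, one incorrect'' mass plus ``both incorrect but disagreeing'' mass, with condition \eqref{ieq:delta-condition} absorbing exactly the latter term. Your vote-profile algebra (the identities $\elh - \tfrac12\edh = \tfrac12\exd[\wr^2 + \sum_{k\neq Y}p_k^2]$ and $\prr(\text{both wrong, disagree}) = \wr^2 - \sum_{k\neq Y}p_k^2$) is just a coordinate-level rewriting of the paper's event-probability identities, so the two proofs differ only in notation, not substance.
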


The condition \eqref{ieq:delta-condition} can be rephrased as follows: compared to the error $\prho(h(x)\neq y)$, the entropy of the distribution of wrong predictions is small, and it is concentrated on a small number of labels. 
A potential problem is that one must know or estimate the smallest possible value of $\varepsilon$ in advance. At least, we can prove that $\varepsilon = \frac{K\!-\!2}{2(K\!-\!1)}$ always satisfies the condition \eqref{ieq:delta-condition} for an ensemble of $K$-class classifiers. The proof is deferred to Appendix \ref{proof:delta}. 
\begin{corollary} \label{cor:delta}
For any $\eta$-polarized ensemble $\rho$ of K-class classifiers, we have
\begin{align*}
    L(\hmv) \leq \, \eta\,\left[\left(1+\frac{K\!-\!2}{2(K\!-\!1)}\right)\elh - \frac{1}{2}\edh\right].
\end{align*}
\end{corollary}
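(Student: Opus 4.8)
The plan is to obtain Corollary~\ref{cor:delta} directly from Theorem~\ref{thm:pipj-restricted} by showing that the \emph{universal} choice $\varepsilon = \frac{K-2}{2(K-1)}$ always satisfies the hypothesis \eqref{ieq:delta-condition} for an ensemble of $K$-class classifiers. Since both sides of \eqref{ieq:delta-condition} are expectations $\exd[\cdot]$ over $(X,Y)\sim\cd$, it suffices to establish the inequality pointwise: for every fixed $(x,y)$ I would aim to prove
\begin{align*}
\tfrac{1}{2}\,\prr\big(h(x)\neq y,\, h'(x)\neq y,\, h(x)\neq h'(x)\big) \;\le\; \frac{K-2}{2(K-1)}\,\prho\big(h(x)\neq y\big),
\end{align*}
after which integrating both sides against $\cd$ recovers \eqref{ieq:delta-condition} and invoking the theorem with this $\varepsilon$ gives the stated bound.

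The conceptually key step is to rewrite the pointwise quantities in terms of the label distribution $p_k := \prho(h(x)=k)$, $k\in[K]$. Writing $\wr = \prho(h(x)\neq y) = \sum_{k\neq y} p_k$, the independence of $h$ and $h'$ under $\rho^2$ yields the identity
\begin{align*}
\prr\big(h(x)\neq y,\, h'(x)\neq y,\, h(x)\neq h'(x)\big) = \sum_{\substack{j,k\neq y \\ j\neq k}} p_j p_k = \wr^2 - \sum_{k\neq y} p_k^2 ,
\end{align*}
so the pointwise claim collapses to $\tfrac{1}{2}\big(\wr^2 - \sum_{k\neq y} p_k^2\big) \le \frac{K-2}{2(K-1)}\,\wr$.

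Finally, I would bound the sum of squares from below by Cauchy--Schwarz (equivalently the power-mean inequality): since the $K-1$ nonnegative terms $\{p_k\}_{k\neq y}$ sum to $\wr$, one has $\sum_{k\neq y} p_k^2 \ge \wr^2/(K-1)$, whence $\wr^2 - \sum_{k\neq y} p_k^2 \le \frac{K-2}{K-1}\,\wr^2$. Using $\wr\le 1$ to replace one factor of $\wr^2$ by $\wr$ then produces exactly $\frac{K-2}{2(K-1)}\,\wr$, as needed (the degenerate case $\wr=0$ being trivial). The main point demanding care is precisely this last combination: two estimates are chained here --- the power-mean lower bound on $\sum_{k\neq y}p_k^2$ and the bound $\wr^2\le\wr$ --- and one must verify they combine to land on the claimed constant $\frac{K-2}{2(K-1)}$ rather than something looser. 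The computation above confirms they do, so the pointwise inequality holds, condition \eqref{ieq:delta-condition} is verified for all $(x,y)$, and Theorem~\ref{thm:pipj-restricted} closes the argument.
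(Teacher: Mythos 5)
Your proposal is correct and takes essentially the same route as the paper's proof: both reduce condition \eqref{ieq:delta-condition} to a pointwise inequality, expand both probabilities in terms of $p_k = \prho(h(x)=k)$, bound the quadratic term via a symmetric-function inequality, use $\wr \le 1$ to pass from $\wr^2$ to $\wr$, and then invoke Theorem~\ref{thm:pipj-restricted}. The only cosmetic difference is that you apply Cauchy--Schwarz to get $\sum_{k\neq y}p_k^2 \ge \wr^2/(K-1)$ where the paper applies Newton's inequality (Lemma~\ref{lemma:Newton}) to $\sum_{i<j}p_ip_j$; these two bounds are algebraically equivalent.
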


Naturally, this $\varepsilon$ is not good enough for our goal. We discuss more on how to estimate the smallest possible value of $\varepsilon$ in the following section.

\section{A Universal Law for Ensembling}
\label{sec:law}

In this section, our goal is to predict the majority vote error rate of an ensemble with large number of classifiers by just using information we can obtain from an ensemble with a small number, e.g., three, of classifiers. Among the elements in the bound in Theorem \ref{thm:pipj-restricted},
\begin{align*}
    \eta\,\left[\left(1+\varepsilon\right)\elh - \frac{1}{2}\edh\right],
\end{align*}
we plug in $\eta=\frac{4}{3}$ as a result of Theorem \ref{thm:polarization4/3}; and since $\elh$ is invariant to the number of classifiers, it remains to predict the behavior of $\edh$ and the smallest possible value of $\varepsilon$, $\varepsilon_\rho = \frac{\exd\left[\prr\left(h(X)\neq Y, h'(X)\neq Y, h(X) \neq h'(X)\right)\right]}{2\exd\left[\prho\left(h(X)\neq Y\right)\right]}$. 
Since the denominator $\exd\left[\prho\left(h(X)\neq Y\right)\right] = \elh$ is invariant to the number of classifiers, and the numerator resembles the disagreement between classifiers, $\varepsilon_\rho$ is expected to follow a similar pattern as $\edh$. Note that the numerator of $\varepsilon_\rho$ has the same form as the disagreement, differing by only one less label. Both are $V$-statistics that can be expressed as a multiple of a $U$-statistic, as shown in equation \eqref{eq:u-stat}.
In the next theorem, we show that the disagreement for a finite number of classifiers can be expressed as the sum of
a hyperbolic curve and an unbiased random walk. Here, $[x]$ denotes the greatest integer less than or equal to $x$ and $\mathcal{D}[0,1]$ is the Skorokhod space on $[0,1]$ (see Appendix \ref{proof:ustat-inv}).
\begin{theorem}\label{thm:disg}
Let $\rho_{N}$ denote an empirical distribution of $N$ independent classifiers $\{h_i\}_{i=1}^{N}$ sampled from a distribution $\rho$ and  $\sigma_1^2 = \Var_{h\sim\rho}(\mathbb{E}_{h'\sim\rho}\mathbb{P}_{\mathcal{D}}(h(X)\neq h'(X)))$. Then, there exists $D_{\infty}>0$ such that
\[
\mathbb{E}_{(h,h')\sim\rho_{N}^{2}}[D(h,h')]=\left(1-\frac{1}{N}\right)\left(D_{\infty}+\frac{2}{\sqrt{N}}Z_{N}\right),
\]
where $\mathbb{E}Z_{N}=0$, $\Var Z_{N} \to \sigma_1^2$ and $\{\frac{\sqrt{t}}{\sigma_1}Z_{[Nt]}\}_{t\in[0,1]}$ converges weakly to a standard Wiener process in $\mathcal{D}[0,1]$ as $N\to\infty$. 
\end{theorem}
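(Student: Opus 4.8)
The plan is to recognize $\E_{(h,h')\sim\rho_N^2}[D(h,h')]$ as a degree-two $U$-statistic and then apply the Hoeffding--H\'ajek theory. Writing $\rho_N$ as the empirical measure on the sampled classifiers $h_1,\dots,h_N$, we have $\E_{\rho_N^2}[D(h,h')]=N^{-2}\sum_{i,j}D(h_i,h_j)$. Since $D(h,h)=0$ the diagonal terms vanish, and since $D$ is symmetric the off-diagonal sum equals $N(N-1)$ times the $U$-statistic
\[
U_N=\binom{N}{2}^{-1}\sum_{i<j}D(h_i,h_j),\qquad g(h,h')=D(h,h').
\]
This gives the exact identity $\E_{\rho_N^2}[D(h,h')]=(1-\tfrac1N)U_N$, which already isolates the prefactor $(1-\tfrac1N)$. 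I would then set $D_\infty:=\E_{\rho^2}[D(h,h')]=\edh$, which is strictly positive whenever the ensemble is nondegenerate, and define $Z_N:=\tfrac{\sqrt N}{2}(U_N-D_\infty)$, so that $U_N=D_\infty+\tfrac{2}{\sqrt N}Z_N$ recovers the claimed representation. The identity $\E Z_N=0$ is then immediate from the unbiasedness of $U$-statistics, $\E U_N=D_\infty$.

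Next I would compute $\Var Z_N=\tfrac{N}{4}\Var U_N$ using the standard variance formula for a degree-two $U$-statistic. Setting $\zeta_1=\Var_{h\sim\rho}(\E_{h'\sim\rho}[D(h,h')])$ and $\zeta_2=\Var_{\rho^2}(D(h,h'))$, one has $\Var U_N=\tfrac{4(N-2)}{N(N-1)}\zeta_1+\tfrac{2}{N(N-1)}\zeta_2$, whence
\[
\Var Z_N=\frac{N-2}{N-1}\,\zeta_1+\frac{1}{2(N-1)}\,\zeta_2\;\longrightarrow\;\zeta_1.
\]
The key observation is that $\E_{h'\sim\rho}[D(h,h')]=\E_{h'\sim\rho}\P_\cd(h(X)\neq h'(X))$, so $\zeta_1$ coincides exactly with the quantity $\sigma_1^2$ defined in the statement. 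Hence $\Var Z_N\to\sigma_1^2$, as required.

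For the functional limit I would use the H\'ajek projection. Let $g_1(h)=\E_{h'\sim\rho}[D(h,h')]-D_\infty$, an i.i.d. mean-zero sequence with variance $\zeta_1=\sigma_1^2$, and write $U_N-D_\infty=\tfrac{2}{N}\sum_{i=1}^N g_1(h_i)+R_N$ with degenerate remainder $R_N$. Then $Z_N=\tfrac{1}{\sqrt N}\sum_{i=1}^N g_1(h_i)+\tfrac{\sqrt N}{2}R_N$, so the leading part of $\tfrac{\sqrt t}{\sigma_1}Z_{[Nt]}$ is $\tfrac{\sqrt t}{\sqrt{[Nt]}}\cdot\tfrac{1}{\sigma_1}S_{[Nt]}$, where $S_n=\sum_{i\le n}g_1(h_i)$. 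Since $\sqrt t/\sqrt{[Nt]}=N^{-1/2}(1+o(1))$ uniformly for $t$ bounded away from $0$, this matches the Donsker-scaled partial-sum process $\tfrac{1}{\sigma_1\sqrt N}S_{[Nt]}$, which converges weakly to a standard Wiener process in $\mathcal{D}[0,1]$ by the invariance principle for i.i.d. sequences. The nested structure of the samples $h_1,\dots,h_N$ is exactly what makes the partial-sum process well defined across $t$.

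The hard part will be transferring this weak convergence from the H\'ajek projection to the full process, i.e. showing that the degenerate remainder process $\sup_{t\in[0,1]}\big|\tfrac{\sqrt t}{\sigma_1}\cdot\tfrac{\sqrt{[Nt]}}{2}R_{[Nt]}\big|=o_P(1)$. The pointwise bound $\E[R_n^2]=O(n^{-2})$ is routine, but the uniform (process-level) estimate requires a maximal inequality for the degenerate second-order part, which I would obtain via the reverse-martingale structure of $U$-statistics together with Doob's inequality (or a chaining argument); some additional care is needed at $t$ near $0$, where the factor $\sqrt t/\sqrt{[Nt]}$ and the small index $[Nt]$ must be handled separately. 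Combining the negligibility of the remainder with tightness of the leading term and convergence of finite-dimensional distributions then yields the claimed weak convergence in the Skorokhod space $\mathcal{D}[0,1]$.
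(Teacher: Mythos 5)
Your proposal is correct in substance, and its algebraic skeleton coincides with the paper's: the same identity $\E_{(h,h')\sim\rho_N^2}[D(h,h')]=\left(1-\frac{1}{N}\right)U_N$, the same choices $D_\infty=\E_{\rho^2}[D(h,h')]$ and $Z_N=\frac{\sqrt{N}}{2}(U_N-D_\infty)$, and the same observation that $\frac{\sqrt{t}}{\sigma_1}Z_{[Nt]}$ is the natural partial-sum rescaling. Where you genuinely diverge is in how the functional limit is established: the paper cites the invariance principle for $U$-statistics (Theorem~\ref{thm:ustat}, i.e.\ Theorem 5.2.1 of \cite{book_ustatistics}) as a black box, whereas you propose to re-derive it via the H\'ajek projection, Donsker's theorem for the projected sums $S_{[Nt]}=\sum_{i\leq [Nt]}g_1(h_i)$, and a maximal inequality to make the degenerate remainder negligible uniformly in $t$. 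Your route buys one thing the paper's argument does not: your explicit computation $\Var Z_N=\frac{N-2}{N-1}\zeta_1+\frac{1}{2(N-1)}\zeta_2\to\sigma_1^2$ actually \emph{proves} the variance claim, whereas weak convergence of $\xi_N(1)$ alone (which is all the cited theorem yields) does not imply convergence of second moments without a uniform-integrability argument; the paper implicitly relies on exactly the moment formula you wrote down. Conversely, the step you flag as ``the hard part''---showing $\sup_{t\in[0,1]}\sqrt{t}\sqrt{[Nt]}\,|R_{[Nt]}|=o_P(1)$---is precisely the content that the cited invariance principle packages away, and in your write-up it remains a plan rather than a proof; the plan is sound (the centered degenerate sum $\binom{n}{2}R_n$ is a forward martingale in $n$, so Doob's inequality over dyadic blocks of indices gives the required uniform bound, with the $\sqrt{t}$ factor controlling small $[Nt]$), but it must be carried out to make the argument self-contained. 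In short: either execute that maximal-inequality step, or do as the paper does and invoke the known invariance principle, in which case the rest of your argument is already complete and in fact slightly more careful than the paper's on the variance statement.
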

\begin{proof}
Let $\Phi(h_i,h_j)=\prd(h_{i}(X)\neq h_{j}(X))$. We observe that
\begin{align}\label{eq:u-stat}
\frac{N^{2}}{N(N\!-\!1)} & \mathbb{E}_{(h,h')\sim\rho_{N}^{2}}[D(h,h')] =\frac{1}{N(N\!-\!1)}\sum_{i,j=1}^{N}\mathbb{P}_{\mathcal{D}}(h_{i}(X)\neq h_{j}(X)) \nonumber \\
& = \frac{1}{N(N\!-\!1)}\sum_{i,j=1}^{N}\Phi(h_i,h_j) \underset{\substack{\Phi\text{: symmetric} \\ \Phi(h_i,h_i)=0}}{=} \frac{2}{N(N\!-\!1)}\sum_{1\leq i<j \leq N}\Phi(h_i,h_j) \eqqcolon U_{N},
\end{align}
which is a $U$-statistic with the kernel function $\Phi$. Let $\Phi_0 = \mathbb{E}_{(h,h')\sim\rho^2} \Phi(h,h')$. 
 
The invariance principle of $U$-statistics (Theorem \ref{thm:ustat} in Appendix \ref{proof:ustat-inv}) states that the process $\xi_N = (\xi_N(t), t \in [0,1])$, defined by
$\xi_N(\frac{k}{N}) = \frac{k}{2 \sqrt{N \sigma_1^2}} (U_k - \Phi_0)$
and $\xi_N(t) = \xi_N(\frac{[Nt]}{N})$, converges weakly to a standard Wiener process in $\mathcal{D}[0,1]$ as $N\!\to\!\infty$, since $\sigma_1^2 = \Var_{h \sim \rho} \mathbb{E}_{h' \sim \rho} \Phi(h,h')$. Therefore, $U_N$ converges in probability as $N \!\to\! \infty$ to $D_\infty \coloneqq \Phi_0$. 

Letting $Z_N \!=\! \sigma_1 \xi_N(1) \!=\! \frac{\sqrt{N}}{2} (U_N \!-\! D_\infty)$, we can express $U_N$ as $U_N \!=\! D_\infty \!+\! \frac{2}{\sqrt{N}} Z_N$, with $\mathbb{E}Z_N \!=\! 0$ and $\Var Z_N \!\to\! \sigma_1^2$. Since
$\frac{\sqrt{t}}{\sigma_1}Z_{[Nt]} \!=\! \sqrt{\frac{Nt}{[Nt]}}\,\xi_N(\frac{[Nt]}{N}) \!=\! \sqrt{\frac{Nt}{[Nt]}}\,\xi_N(t)$, it follows by Slutsky's Theorem that  $\{\frac{\sqrt{t}}{\sigma_1} Z_{[Nt]}\}_{t \in [0,1]}$ converges weakly to a standard Wiener process in $\mathcal{D}[0,1]$ as $N \!\to\! \infty$.
\end{proof}

Theorem \ref{thm:disg} suggests that the disagreement within $N$ classifiers, $\mathbb{E}_{\rho_{N}^{2}}[D(h,h')]$, can be approximated as $\frac{N-1}{N}D_\infty$. From the disagreement within $M (\ll\!\! N)$ classifiers, $D_\infty$ can be approximated as $\frac{M}{M-1}\mathbb{E}_{\rho_{M}^{2}}[D(h,h')]$, and therefore we get 
\begin{align}\label{approx:disg}
    \mathbb{E}_{\rho_{N}^{2}}[D(h,h')] \approx \frac{N-1}{N}\cdot\frac{M}{M-1}\mathbb{E}_{\rho_{M}^{2}}[D(h,h')].
\end{align}
Assume that we have three classifiers sampled from $\rho$. 
We denote the average error rate, the disagreement, and the $\varepsilon_\rho$ from these three classifiers by $\E_3[L(h)]$, $\E_3[D(h,h')]$, and $\varepsilon_3$, respectively. 
Then, from Theorem \ref{thm:pipj-restricted} and approximation \eqref{approx:disg} (which applies to both disagreement and $\varepsilon_\rho$), we estimate the majority vote error rate of $N$ classifiers from $\rho$ as the following:
\begin{align} \label{eq:fbound}
    L(\hmv) 
    & \lessapprox \frac{4}{3}\left[\left(1+\frac{N-1}{N}\cdot\frac{3}{2}\cdot\varepsilon_3\right)\,\E_3[L(h)] - \frac{N-1}{N}\cdot\frac{3}{2}\cdot\frac{1}{2}\E_3[D(h,h')]\right] \nonumber \\
    & = \frac{4}{3}\left[\E_3[L(h)] + \frac{3(N-1)}{2N}\left(\varepsilon_3 \E_3[L(h)] - \frac{1}{2}\E_3[D(h,h')] \right)\right]  .
\end{align}
Alternatively, we can use the polarization measured from three classifiers, $\eta_3$, instead of $\eta=\frac{4}{3}$, to obtain:
\begin{align} \label{eq:fbound3}
    L(\hmv) = \eta_3\left[\E_3[L(h)] + \frac{3(N-1)}{2N}\left(\varepsilon_3 \E_3[L(h)] - \frac{1}{2}\E_3[D(h,h')] \right)\right]  .
\end{align}

\begin{figure}[]
    \centering
    \includegraphics[width=1\linewidth]{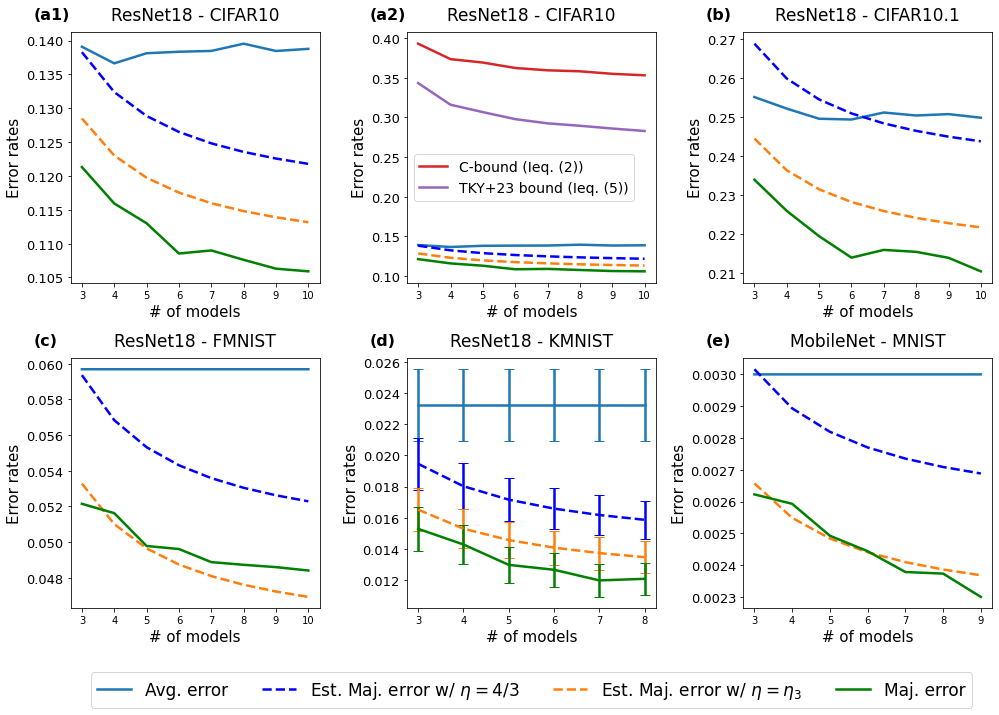}
    \caption{Comparing the estimated (extrapolated) majority vote error rates in equation \eqref{eq:fbound} (blue-dashed lines) and \eqref{eq:fbound3} (orange-dashed lines) with the true majority vote error (green solid line) for each number of classifiers. The solid sky-blue line corresponds to the average error rate of constituent classifiers. Subplots \textbf{(a1), (b), (c), (d), (e)} show the results from different pairs of (classification model, dataset). Subplot \textbf{(a2)} overlays the right hand side of inequality \eqref{ieq:cbound} (C-bound, colored red) and inequality \eqref{ieq:ryan-second-order-bound} (\cite{theisen2023} bound, colored purple) on the subplot \textbf{(a1)}. These two quantities from previous studies are much larger compared to the average error rate. We see the same pattern for other (architecture, dataset) pairs, which we therefore omit from the plot. For more details on these empirical results, see Appendix \ref{app:exp}.}
    \label{fig:tight}
\end{figure}

Figure \ref{fig:tight} presents empirical results that compare the estimated (extrapolated) majority vote error rates in equations \eqref{eq:fbound} and \eqref{eq:fbound3} with the true majority vote error for each number of classifiers. ResNet18 models are tested on four different dataset: CIFAR-10, CIFAR-10.1, Fashion-MNIST \cite{fmnist} and Kuzushiji-MNIST \cite{kmnist} where the models are trained on the corresponding train data. MobileNet \cite{mobilenet} is trained and tested on the MNIST \cite{mnist} dataset. Not only do the estimators show significant improvement compared to the bounds introduced in Section \ref{sec:existing-bounds}, we observe that the estimators are very close to the actual majority vote error rate; and thus the estimators have practical usages, unlike the bounds from previous studies. In Figure \ref{fig:tight}\hyperref[fig:tight]{(a2)}, existing bounds \eqref{ieq:cbound} and \eqref{ieq:ryan-second-order-bound} are much larger compared to the average error rate. This is also the case for (architecture, dataset) pairs of other subplots.

\section{Discussion and Conclusion}
\label{sxn:conclusion}

This work addresses the question: how does the majority vote error rate change according to the number of classifiers? 
While this is an age-old question, it is one that has received renewed interest in recent years.
On the journey to answering the question, we introduce several new ideas of independent interest. 
(1) We introduced the polarization $\eta_\rho$, of an ensemble of classifiers.
This notion plays an important role throughout this paper and appears in every upper bound presented. 
Although Theorem~\ref{thm:polarization4/3} gives some insight into polarization, our conjectured neural polarization law (Conjecture~\ref{conj:polarization4/3}) is yet to be proved or disproved, and it provides an exciting avenue for future work. 
(2) We proposed two classes of ensembles whose entropy is restricted in different ways. 
Without these constraints, there will always be examples that saturate even the least useful majority vote error bounds. 
We believe that accurately describing how models behave in terms of the entropy of their output is key to precisely characterizing the behavior of majority vote, and likely other ensembling~methods. 

Throughout this paper, we have theoretically and empirically demonstrated that polarization is fairly invariant to the hyperparameters and architecture of classifiers. 
We also proved a tight bound for majority vote error, under an assumption with another quantity $\varepsilon$, and we presented how the components of this tight bound behave according to the number of classifiers. 
Altogether, we have sharpened bounds on the majority vote error to the extent that we are able to identify the trend of majority vote error rate in terms of number of classifiers.

We close with one final remark regarding the metrics used to evaluate an ensemble. 
Majority vote error rate is the most common and popular metric used to measure the performance of an ensemble. 
However, it seems unlikely that a practitioner would consider an ensemble to have performed adequately if the majority vote conclusion was correct, but was only reached by a relatively small fraction of the classifiers. 
With the advent of large language models, it is worth considering whether the majority vote error rate is still as valuable. 
The natural alternative in this regard is the probability $\prho(\wr>1/2)$, that is, the probability that at least half of the classifiers agree on the correct answer. 
This quantity is especially well-behaved, and it frequently appears in our proofs. 
(Indeed, every bound presented in this work serves as an upper bound for  $\prho(\wr>1/2)$.)
We conjecture that this quantity is useful much more generally.

\paragraph{Acknowledgements.}
We would like to thank the DOE, IARPA, NSF, and ONR for providing partial support of this work.

\bibliographystyle{alpha}
\bibliography{references_final}

\newpage
\appendix


\section{More discussion on competence}

\label{app:more-on-competence}
In this section, we delve more into the competence condition that was introduced in \cite{theisen2023}. We explore in which cases the competence condition might not work and how to overcome these issues. We discuss a few milder versions of competence that are enough for bounds \eqref{ieq:always-better} and \eqref{ieq:ryan-second-order-bound} to hold. Then we discuss how to check whether these weaker competence conditions hold in practice, with or without a separate validation set. We start by formally stating the original competence condition.
\begin{definition}[Competence, \cite{theisen2023}]
\label{def:competence} 
The ensemble $\rho$ is \defn{competent} if for every $0 \leq t \leq 1/2$, 
\begin{align} \label{eq:competence}
\prd(\wr \in [t,1/2)) \geq \prd(\wr \in [1/2,1-t]).
\end{align}
\end{definition}

\subsection{Cases when competence fails}\label{app:comptence-fail}
One tricky part in the definition of competence is that it requires inequality \eqref{eq:competence} to hold for \textbf{every} $0\leq t \leq 1/2$. In case $t=1/2$, the inequality becomes 
\begin{align*}
    0 \geq \prd(\wr = 1/2).
\end{align*}
This is not a significant issue in the case that $\rho$ is a continuous distribution over classifiers, e.g., a Bayes posterior or a distribution over a parametric family $h_\theta$, as $\{\wr = 1/2\}$ would be a measure-zero set. 
In the case that $\rho$ is a discrete distribution over finite number of classifiers, however, $\prd(\wr = 1/2)$ is likely to be a positive quantity, in which case it can violate the competence condition.

That being said, $\{(x,y) \mid \wr(x,y) = 1/2\}$ represent tricky data points that deserves separate attention. 
This event can be divided into two cases: 
1) all the classifiers that incorrectly made a prediction output the same label; or 
2) incorrect predictions consist of multiple labels so that the majority vote outputs the true label. Among these two possibilities, the first case is troublesome. We denote such data points by $\tie(\rho, \cd)$: 
\begin{align*}
    \tie(\rho, \cd) & := \\ 
    & \hspace{-0.95cm}\{(x,y) \mid \P_\rho(\one(h(x)=j)) = \P_\rho(\one(h(x)=y)) = 1/2 \text{  for true label $y$ and an incorrect label $j$} \} .
\end{align*}
 In this case, the true label and an incorrect label are chosen by exactly the same $\rho-$weights of classifiers. An easy way to resolve this issue is to slightly tweak the weights. For instance, if $\rho$ is an equally weighted sum of two classifiers, we can change each of their weights to be $(1/2+\epsilon, 1/2-\epsilon)$, instead of $(1/2,1/2)$. This change may seem manipulative, but it corresponds to a deterministic tie-breaking rule which prioritizes one classifier over the other, which is a commonly used tie-breaking~rule.
\begin{definition}[Tie-free ensemble]\label{def:tie-free}
  An ensemble is \textbf{tie-free} if $\P_\cd(\tie(\rho, \cd)) = 0$.
\end{definition}
\begin{proposition}
    An ensemble with a deterministic tie-breaking rule is tie-free.
\end{proposition}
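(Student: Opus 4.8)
The plan is to make precise the paper's own observation that a deterministic tie-breaking rule ``corresponds to'' a perturbation of the classifier weights, and then to show that such a perturbation destroys every exact $1/2$-versus-$1/2$ split. Write $\rho$ as the weighted sum of $N$ classifiers $\{h_i\}_{i=1}^N$ with weights $(w_1,\dots,w_N)$, $\sum_i w_i = 1$. A deterministic tie-breaking rule fixes a strict priority ordering of the classifiers; following the $(1/2,1/2)\mapsto(1/2+\epsilon,1/2-\epsilon)$ example in the text, I would realize this rule as a replacement of the weights by perturbed weights $(w_1',\dots,w_N')$ that still form a probability vector ($\sum_i w_i'=1$, achieved by taking the perturbation to sum to zero) and that agree with the priority ordering. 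The majority vote of $\rho$ under the tie-breaking rule is then exactly the majority vote of the reweighted ensemble $\rho'$.

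The key step is to choose the perturbation so that no two distinct subsets $S, T \subseteq [N]$ satisfy $\sum_{i \in S} w_i' = \sum_{i \in T} w_i'$. I would justify this by a genericity argument: for each fixed pair $(S,T)$ with $S \neq T$, the equation $\sum_{i\in S} w_i' = \sum_{i \in T} w_i'$ cuts out a measure-zero subset of the probability simplex $\{\sum_i w_i' = 1\}$, so the union over the finitely many pairs is still measure zero; hence a generic arbitrarily small perturbation avoids all of them while preserving every strict majority that already held. An explicit alternative is $w_i' \propto w_i + \delta\, b^{-i}$ for a suitable base $b$ and small $\delta$, whose subset sums are distinct by a positional-numeral argument. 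Either route yields weights with pairwise-distinct subset sums.

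Finally I would argue by contradiction. Suppose $(x,y)\in\tie(\rho,\cd)$ under the perturbed weights. By definition there is an incorrect label $j$ with $\P_{\rho'}(\one(h(x)=y)) = \P_{\rho'}(\one(h(x)=j)) = 1/2$, i.e.\ the index sets $S = \{i : h_i(x) = y\}$ and $T = \{i : h_i(x) = j\}$ satisfy $\sum_{i\in S} w_i' = \sum_{i \in T} w_i' = 1/2$. Since $y \neq j$, these sets are disjoint and (being of positive total weight) nonempty, hence distinct, contradicting the pairwise-distinct-subset-sum property of $(w_i')$. Therefore $\tie(\rho,\cd) = \emptyset$, so $\P_\cd(\tie(\rho,\cd)) = 0$ and the ensemble is tie-free.

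I expect the main obstacle to be the first step rather than the last: pinning down a definition of ``deterministic tie-breaking rule'' general enough to cover the intended use, yet concrete enough to be encoded as a weight perturbation that simultaneously (i) leaves every non-tied majority decision unchanged, (ii) respects the chosen priority order, and (iii) keeps the weights a valid probability distribution. Once the rule is identified with such a perturbation, the elimination of ties is the short contradiction above.
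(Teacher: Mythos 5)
Your proof is correct, but there is no proof in the paper to compare it against: the proposition is stated bare, evidently regarded as immediate from the preceding remark that a deterministic tie-breaking rule ``corresponds to'' replacing weights $(1/2,1/2)$ by $(1/2+\epsilon,1/2-\epsilon)$. Your argument supplies the missing formalization, and it contains a genuinely necessary observation that the paper glosses over: a perturbation that merely respects a priority ordering does \emph{not} suffice. For instance, with four classifiers and weights $(0.30,0.28,0.22,0.20)$, compatible with the priority $1\succ 2\succ 3\succ 4$, the voter sets $\{1,4\}$ and $\{2,3\}$ each carry weight exactly $1/2$, so a tie in the sense of $\tie(\rho,\cd)$ can still occur. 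What is needed is precisely your pairwise-distinct-subset-sums condition, and your two routes to it are both sound: the genericity argument works because each coincidence $\sum_{i\in S}w_i'=\sum_{i\in T}w_i'$ with $S\neq T$ is a proper hyperplane section of the simplex and there are finitely many pairs $(S,T)$, while small enough perturbations preserve the finitely many strict inequalities among the $2^N$ subset sums and hence every strict majority. Your closing contradiction (the voter sets for $y$ and for $j$ are disjoint, each nonempty since each carries weight $1/2>0$, hence distinct) is exactly right and gives $\tie(\rho',\cd)=\emptyset$, which is stronger than the measure-zero requirement of Definition \ref{def:tie-free}. This is also consistent with what the authors actually do empirically (Appendix \ref{app:exp}), where random weight perturbations are used to guarantee tie-freeness --- that is, your genericity argument, applied almost surely.

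Two caveats, neither fatal. First, as you note yourself, the proposition is only provable under the reading in which the tie-breaking rule is implemented by reweighting the classifiers: a label-based deterministic rule such as ``break ties toward the smallest label index'' changes neither the weights nor the set $\tie(\rho,\cd)$, so under the literal definitions it cannot make an ensemble tie-free; the reweighting reading is clearly the one intended by the paper. Second, your explicit construction $w_i'\propto w_i+\delta b^{-i}$ has distinct subset sums by the positional-numeral argument only when the original weights contribute no offsetting differences (e.g.\ when they are all equal); in general one still needs genericity in $\delta$ to rule out cancellation against differences of the original subset sums, so the genericity route should be the primary one and the numeral construction a special case.
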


With such tweak to make the set $\tie(\rho, \cd)$ to be an empty set or a measure-zero set, we present a slightly milder condition that is enough for the bounds \eqref{ieq:always-better} and \eqref{ieq:ryan-second-order-bound} to still hold.

\begin{definition}[Semi-competence]
  The ensemble $\rho$ is \textbf{semi-competent} if for every $0 \leq t < 1/2$,  
  \begin{align} \label{ieq:semi_competence}
      P(\wr \in [t,1/2]) \geq \P(\wr \in (1/2,1-t]).
  \end{align}
\end{definition}
Note that inequality \eqref{ieq:semi_competence} is a strictly weaker condition than inequality \eqref{eq:competence}, and hence competence implies semi-competence. The converse is not true.  An ensemble is semi-competent even if the point-wise error $\wr(X,Y) = 1/2$ on every data points, but such an ensemble is not competent.
\begin{theorem} \label{thm:semi_competence1}
    For a tie-free ensemble and semi-competent ensemble $\rho$,
    $L(\hmv) \leq \elh$ and 
    \begin{align*} 
        L(\hmv) \leq \frac{4(K-1)}{K}\left(\elh - \frac{1}{2}\edh\right)
    \end{align*}
    holds in $K$-class classification setting.
\end{theorem}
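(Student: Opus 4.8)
The plan is to reduce both inequalities to statements about the single scalar quantity $\prd(\wr>1/2)$ and then to extract those statements from semi-competence by folding the law of $\wr$ about the point $1/2$. First I would prove the reduction $L(\hmv)\le\prd(\wr>1/2)$, which is exactly where the tie-free hypothesis is used. Pointwise: if $\wr(x,y)<1/2$ then the true label carries strictly more than half the $\rho$-mass, so it is the unique $\argmax$ and the majority vote is correct; and whenever the majority vote is wrong some label $j\neq y$ satisfies $\P_\rho(h(x)=j)\ge\P_\rho(h(x)=y)$, which together with $\P_\rho(h(x)=j)+\P_\rho(h(x)=y)\le1$ forces $\P_\rho(h(x)=y)\le1/2$, i.e. $\wr(x,y)\ge1/2$. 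The only residual case is $\wr(x,y)=1/2$ with the vote wrong, which requires the incorrect mass to sit on a single label tied with the truth — precisely the set $\tie(\rho,\cd)$. Since $\rho$ is tie-free this set is $\cd$-null, so $\{\hmv(X)\neq Y\}\subseteq\{\wr>1/2\}$ up to a null set and $L(\hmv)\le\prd(\wr>1/2)$.

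For the bound $L(\hmv)\le\elh$ I would show $\prd(\wr>1/2)\le\exd[\wr]=\elh$. Writing $h(w)=w-\one(w>1/2)$, the goal is $\exd[h(\wr)]=\elh-\prd(\wr>1/2)\ge0$. A layer-cake representation on each side of $1/2$ together with Fubini gives
\begin{align*}
\exd[h(\wr)]=\int_0^{1/2}\left(\prd(u<\wr\le1/2)-\prd(1/2<\wr<1-u)\right)\,du.
\end{align*}
Semi-competence with $t=u$ yields $\prd(\wr\in[u,1/2])\ge\prd(\wr\in(1/2,1-u])$, and since the law of $\wr$ has at most countably many atoms the boundary corrections $\prd(\wr=u)$ and $\prd(\wr=1-u)$ vanish for almost every $u$; hence the integrand is nonnegative a.e. and $\exd[h(\wr)]\ge0$. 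Combined with the reduction, this is the first bound.

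For the second inequality I would first upgrade Proposition \ref{cor:semi-and-first} from competence to semi-competence, i.e. establish $\prd(\wr>1/2)\le2\,\exd[\wr^2]$ (equivalently $\eta_\rho\le2$). The proof of Proposition \ref{cor:semi-and-first} rests on the same folding of the law of $\wr$ about $1/2$, driven by the cumulative dominance $\prd(\wr\in[u,1/2])\ge\prd(\wr\in(1/2,1-u])$; it transfers to the semi-competent case essentially verbatim, the one new observation being that a possible atom of $\wr$ at $1/2$ contributes $\tfrac12\prd(\wr=1/2)\ge0$ to the right-hand side and nothing to $\prd(\wr>1/2)$, so it can only help. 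Given $\eta_\rho\le2$, the conclusion follows from Theorem \ref{thm:second-order-competence}, whose coefficient is increasing in the polarization, provided $\elh-\tfrac12\edh\ge0$. The latter is a pointwise fact: with $p_k=\P_\rho(h(X)=k)$ and $d$ the pointwise disagreement, the identity $\wr-\tfrac12 d=\tfrac12\wr^2+\tfrac12\sum_{k\neq y}p_k^2$ and Cauchy--Schwarz $\sum_{k\neq y}p_k^2\ge\wr^2/(K-1)$ give $\elh-\tfrac12\edh\ge\tfrac{K}{2(K-1)}\exd[\wr^2]\ge0$. Chaining everything, $L(\hmv)\le\prd(\wr>1/2)\le2\,\exd[\wr^2]\le\tfrac{4(K-1)}{K}(\elh-\tfrac12\edh)$.

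I expect the main obstacle to be the folding arguments of the last two paragraphs, and in particular the rigorous accounting of atoms. Two issues arise: the countably many generic atoms of the law of $\wr$, which force an almost-everywhere rather than a pointwise comparison; and — most importantly — a possible atom at exactly $\wr=1/2$, which is precisely where semi-competence departs from competence and where the tie-free hypothesis must be invoked to guarantee that this atom sits on the majority-vote-correct side. For the quadratic inequality the nonlinearity also means the comparison cannot be done level-by-level; it requires summation by parts against the cumulative dominance, where the reflection identity $\phi(\tfrac12-s)+\phi(\tfrac12+s)=4s^2\ge0$ for $\phi(w)=2w^2-\one(w>1/2)$ is the engine that keeps the folded contributions controlled. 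Once these measure-theoretic points are settled, the remaining algebra and the Cauchy--Schwarz step are routine.
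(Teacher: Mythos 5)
Your proposal is correct and follows essentially the same route as the paper: the tie-free reduction $L(\hmv)\le\prd(\wr>1/2)$ is exactly Lemma~\ref{lemma:tie-free-markov}, and your folding of the law of $\wr$ about $1/2$ (layer-cake for the linear case, reflection identity $\phi(\tfrac12-s)+\phi(\tfrac12+s)=4s^2$ for the quadratic case) is precisely the paper's stochastic-dominance Lemma~\ref{lemma:stoch_dominant} applied with $g(x)=x$ and $g(x)=2x^2$, yielding $2$-polarization and then the conclusion via Theorem~\ref{thm:second-order-competence}. The only cosmetic difference is that you re-derive the $K$-class tandem bound (your pointwise identity plus Cauchy--Schwarz is exactly Lemmas~\ref{lemma:tandem} and~\ref{lemma:kclass_tandem_ub}) rather than citing it, and you make explicit the bracket-nonnegativity $\elh-\tfrac12\edh\ge0$ that the paper leaves implicit.
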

We provide the proof as a separate subsection below.

\subsection{Proof of Theorem \ref{thm:semi_competence1} and Proposition \ref{cor:semi-and-first}} 
\label{proof:semi_competence1}
We start with the following lemma, which is a semi-competence version of Lemma 2 from~\cite{theisen2023}. 
\begin{lemma}
\label{lemma:stoch_dominant}
For a semi-competent ensemble $\rho$ and any increasing function $g$ satisfying $g(0) = 0$,
\begin{align*}
    \exd[g(\wr) \one_{\wr \leq 1/2}] \;\geq\; \exd[g(\wtwr) \one_{\wtwr < 1/2}],
\end{align*}
where $\wtwr = 1 - \wr$.
\end{lemma}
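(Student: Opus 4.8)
The plan is to peel $g$ into its level sets and reduce the inequality to the semi-competence inequality applied at each level. First I would rewrite the right-hand side in terms of $\wr$ itself: since $\wtwr = 1 - \wr$, the event $\{\wtwr < 1/2\}$ coincides with $\{\wr > 1/2\}$, so the claim is equivalent to
\[
    \exd\!\left[g(\wr)\,\one(\wr \le 1/2)\right] \;\ge\; \exd\!\left[g(1-\wr)\,\one(\wr > 1/2)\right].
\]
On both events the argument of $g$ lies in $[0,1/2]$, and because $g$ is increasing with $g(0)=0$ it is nonnegative there; hence both sides are nonnegative and all the interchanges below are justified by Tonelli's theorem.

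Next I would use the layer-cake (Lebesgue--Stieltjes) representation $g(w) = \int_{(0,w]} dg(s)$, valid for any increasing $g$ with $g(0)=0$, writing $dg$ for the nonnegative measure induced by $g$. This yields the pointwise identities $g(\wr)\,\one(\wr \le 1/2) = \int_{(0,1/2]} \one(s \le \wr \le 1/2)\,dg(s)$ and, using $\one(s \le 1-\wr) = \one(\wr \le 1-s)$, also $g(1-\wr)\,\one(\wr > 1/2) = \int_{(0,1/2]} \one(1/2 < \wr \le 1-s)\,dg(s)$. Taking $\exd$ and exchanging expectation with the $dg$-integral gives
\[
    \exd\!\left[g(\wr)\,\one(\wr \le 1/2)\right] = \int_{(0,1/2]} \prd(\wr \in [s,1/2])\,dg(s), \qquad \exd\!\left[g(1-\wr)\,\one(\wr > 1/2)\right] = \int_{(0,1/2]} \prd(\wr \in (1/2,1-s])\,dg(s).
\]

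Finally I would invoke semi-competence pointwise in the level variable: applying inequality \eqref{ieq:semi_competence} with $t=s$ for each $s \in [0,1/2)$ gives $\prd(\wr \in [s,1/2]) \ge \prd(\wr \in (1/2,1-s])$, while at $s = 1/2$ the right-hand integrand is $\prd(\wr \in (1/2,1/2]) = 0 \le \prd(\wr = 1/2)$, so the inequality holds trivially there. Integrating this pointwise inequality against the nonnegative measure $dg$ then shows that the left-hand side dominates the right-hand side. The one place requiring care --- and the main obstacle --- is the endpoint bookkeeping at the level $s = 1/2$ together with the treatment of a possible atom of $\wr$ at $1/2$: the intervals must be matched so that the closed-at-$1/2$ interval $[s,1/2]$ on the left and the open-at-$1/2$ interval $(1/2,1-s]$ on the right align exactly with the form of \eqref{ieq:semi_competence}. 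This is precisely why the lemma pairs $\{\wr \le 1/2\}$ with $\{\wtwr < 1/2\}$, and why the tie-free convention is the natural companion hypothesis.
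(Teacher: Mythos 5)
Your proof is correct, and it reaches the conclusion by a mechanism that differs from the paper's, although both arguments ultimately reduce to applying the semi-competence inequality \eqref{ieq:semi_competence} at each level. The paper works on the \emph{value} axis: it computes the tail functions $\prd(\wr \one_{\wr \leq 1/2} \geq x) = \prd(\wr \in [x,1/2])\,\one_{x \leq 1/2}$ and $\prd(\wtwr \one_{\wtwr < 1/2} \geq x) = \prd(\wr \in (1/2,1-x])\,\one_{x \leq 1/2}$, invokes semi-competence to get first-order stochastic dominance of the truncated variables, transfers that dominance through $g$ (using $g(x\,\one_{x\leq c}) = g(x)\,\one_{x\leq c}$ and the fact that increasing maps preserve stochastic dominance), and finishes with $\E X = \int_0^\infty \P(X \geq x)\,\de x$. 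You instead work on the \emph{argument} axis: you decompose $g$ via its Lebesgue--Stieltjes measure, push the expectation inside by Tonelli, and compare the integrands $\prd(\wr \in [s,1/2])$ and $\prd(\wr \in (1/2,1-s])$ pointwise in $s$; this is more direct (one Fubini step replaces dominance, transfer, and the tail-integral formula), and your endpoint bookkeeping --- closed at $1/2$ on the left, open at $1/2$ on the right, with the $s=1/2$ integrand vanishing --- exactly matches the paper's interval conventions. Two caveats. First, the identity $g(w) = \int_{(0,w]} dg(s)$ is only guaranteed for right-continuous $g$: for a general increasing $g$ the Stieltjes measure is built from the right-continuous modification, and the identity can fail at jump points of $g$, which would matter if $\wr$ had an atom there; the lemma as stated allows arbitrary increasing $g$, and the paper's route covers that case with no extra work, whereas yours would need a short approximation argument --- immaterial in practice, since the lemma is only applied with $g(x)=2x^2$. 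Second, your closing remark about tie-freeness is a red herring: the tie-free condition plays no role in this lemma (it enters only in Lemma \ref{lemma:tie-free-markov}); the pairing of $\{\wr \leq 1/2\}$ with $\{\wtwr < 1/2\}$ is dictated by the form of the semi-competence inequality alone.
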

\begin{proof}
For every $x \in [0,1]$, it holds that
\begin{align*}
\prd(\wr \one_{\wr \leq 1/2} \geq x) 
& = \prd({\wr}\in [x, 1/2]) \, \one_{x \leq 1/2},  \\
\prd(\wtwr \one_{\wtwr < 1/2} \geq x) 
& = \prd(\wtwr \in [x, 1/2)) \, \one_{x \leq 1/2}
= \prd(\wr \in (1/2, 1-x]) \, \one_{x \leq 1/2}.
\end{align*}
From the definition of semi-competence, this implies that $\prd({\wr} \one_{{\wr} \leq 1/2} \geq x) \geq \prd(\wtwr \one_{\wtwr < 1/2} \geq x)$ for every $x \in [0,1]$. Using the fact that $g(x\,\one_{x\leq c}) = g(x)\one_{x\leq c}$ for any increasing function $g$ with $g(0) = 0$, we obtain
\begin{align*}
\prd(h({\wr}) \one_{{\wr} \leq 1/2} \geq x) \geq \prd(h(\wtwr) \one_{\wtwr < 1/2} \geq x).
\end{align*}
Putting these together with a well-known equality $\E X = \int_0^\infty \P(X \geq x) \mathrm{d}x$ for a non-negative random variable $X$ proves the lemma.
\end{proof}

Now we 
use Lemma \ref{lemma:stoch_dominant} 
and Theorem \ref{thm:second-order-competence}
to
prove Theorem \ref{thm:semi_competence1}. 

\begin{proof}[Proof of Theorem \ref{thm:semi_competence1}]

Applying Lemma \ref{lemma:stoch_dominant} with $g(x) = 2x^2$ gives, 
\begin{align} \label{eq:thm:semi_comp_proof2}
    \exd[2\wr^2\one_{\wr \leq 1/2}] \geq \exd[2\wtwr^2 \one_{\wtwr < 1/2}] = \exd[(2-4\wr+2\wr^2) \one_{\wr > 1/2}].
\end{align}
Putting this together with the following decomposition of $\exd[2\wr^2]$ shows that the ensemble $\rho$ is $2$-polarized:
\begin{align} \label{eq:cor:second-order-competence}
\begin{split}
    \exd[2\wr^2] 
    & \,\geq\, \exd[2\wr^2 \one_{\wr > 1/2}] + \exd[2\wr^2\one_{\wr \leq 1/2}] \\
    & \underset{\eqref{eq:thm:semi_comp_proof2}}{\geq} \exd[2\wr^2 \one_{\wr > 1/2}] + \exd[(2-4\wr+2\wr^2) \one_{\wr > 1/2}] \\
    & \,\geq\, \exd[(1-2\wr)^2 \one_{\wr > 1/2}] + \prd(\wr > 1/2) \\
    & \,\geq\, \prd(\wr > 1/2) .
\end{split}
\end{align}
Therefore, applying Theorem \ref{thm:second-order-competence} with constant $\eta=2$ concludes the proof.
\end{proof}

We also state the following proof of Proposition \ref{cor:semi-and-first} for completeness.

\begin{proof}[Proof of Proposition \ref{cor:semi-and-first}]
    Inequality \eqref{eq:cor:second-order-competence} with Lemma \ref{lemma:tie-free-markov} proves the proposition.
\end{proof}

\subsection{Example that the bound \eqref{ieq:ryan-second-order-bound} is tight}
\label{app:ryan-bound-tight}
Here, we provide a combination of $(\rho, \cd)$ of which $L(\hmv)$ is arbitrarily close to the bound.

Consider, for each feature $x$, that exactly $(1-\epsilon)$ fraction of classifiers predict the correct label, and that the remaining $\epsilon$ fraction of classifiers predict a wrong label. 
In this case, $L(\hmv)=0$, $\elh = \epsilon$, and $\edh = 2\epsilon(1-\epsilon)$. 
Hence, the upper bound \eqref{ieq:ryan-second-order-bound} is $\frac{4(K-1)}{K}\epsilon^2$, which can be arbitrarily close to $0$.


\newpage
\section{Proofs of our main results}
\label{app:proofs}

In this section, we provide proofs for our main results.

\subsection{Proof of Theorem \ref{thm:polarization4/3}}
\label{proof:polarity4/3} 
We start with the following lemma which shows the concentration of a linear combination of $\wr^2$ and $\one(\wr > 1/2)$.
\begin{lemma}\label{lem:wr2-concen}
    For sampled data points $\{(X_i, Y_i)\}_{i=1}^m \sim \cd$, define $Z_2 := \sum_{i=1}^m \wr^2(X_i,Y_i)$ and $Z_0 := \sum_{i=1}^m \one(\wr(X_i,Y_i) > 1/2)$.
    The ensemble $\rho$ is $\eta$-polarized with probability at least $1-\delta$ if
    \begin{align}\label{ieq:wr2-concen}
        \frac{1}{m}(\eta Z_2-Z_0)> \sqrt{\frac{\max\{\frac{3\eta}{4}, 1\}}{2m} \log\frac{1}{\delta}}.
    \end{align}
\end{lemma}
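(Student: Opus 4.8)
The plan is to reduce the polarization condition to the sign of the mean of a single bounded random variable and then certify that sign by concentration. First I would define, for each sampled point, $V_i := \eta\,\wr^2(X_i,Y_i) - \one(\wr(X_i,Y_i)>1/2)$, so that the $V_i$ are i.i.d.\ and $\frac1m(\eta Z_2 - Z_0) = \frac1m\sum_{i=1}^m V_i$. By definition, $\rho$ is $\eta$-polarized exactly when $\exd[V] = \eta\,\exd[\wr^2] - \prd(\wr>1/2) \ge 0$. Hence it suffices to prove the contrapositive: if $\rho$ is \emph{not} $\eta$-polarized, i.e.\ $\exd[V] < 0$, then the sampling probability that $\frac1m\sum_i V_i$ exceeds the stated threshold $s$ is at most $\delta$ (the worst case being $\exd[V]=0$). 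Concretely I would bound the upper tail $\P\big(\frac1m\sum_i V_i - \exd[V] > s\big)$ and solve for the $s$ making it equal $\delta$.

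The first quantitative ingredient is the almost-sure range of $V$. Splitting on $\wr\le 1/2$ versus $\wr>1/2$ gives $V=\eta\wr^2\in[0,\eta/4]$ in the first case and $V=\eta\wr^2-1\in(\eta/4-1,\eta-1]$ in the second, so $V\in[\eta/4-1,\ \max\{\eta/4,\eta-1\}]$; since $\eta\le 4$ the left endpoint is nonpositive, and the length of this interval is exactly $R:=\max\{1,3\eta/4\}$. This computation is where the factor $\max\{3\eta/4,1\}$ in the bound originates.

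With the range $R$ in hand, I would apply a concentration inequality to $\sum_i V_i$. A plain Hoeffding bound gives $\P(\cdots>s)\le\exp(-2ms^2/R^2)$, hence $s=R\sqrt{\tfrac{1}{2m}\log\tfrac1\delta}$ — larger than the claimed threshold $\sqrt{\tfrac{R}{2m}\log\tfrac1\delta}$ by exactly a factor $\sqrt R$. To recover the sharper form I would switch to a variance-sensitive bound (Bernstein/Bennett), whose leading (sub-Gaussian) term is $\exp(-ms^2/(2\Var V))$; if one can show $\Var(V)\le \tfrac14 R$, then this term is $\le\exp(-2ms^2/R)$, and equating with $\delta$ yields precisely $s=\sqrt{\tfrac{R}{2m}\log\tfrac1\delta}$. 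Substituting $R=\max\{1,3\eta/4\}$ and $s=\frac1m(\eta Z_2 - Z_0)$ then completes the argument.

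The hard part is the variance control. The generic bound for a variable of range $R$ is only $\Var(V)\le R^2/4$, which merely reproduces Hoeffding; the improvement to $\Var(V)\le\tfrac14 R$ that the stated threshold demands must be extracted from the specific shape $V(\wr)=\eta\wr^2-\one(\wr>1/2)$ rather than from the range alone. I would attempt this by computing $\exd[V^2]$ directly and bounding it via the extremal (two-point) distributions of $\wr$ subject to $\exd[V]\le 0$, and I would also need to verify that the range-correction term in the Bernstein/Bennett exponent is genuinely lower order so that the clean sub-Gaussian form is legitimate. This variance estimate is cleanest in the well-behaved regime $\eta\le 4/3$ (where $R=1$) that the paper ultimately cares about, and checking exactly how far it persists for larger $\eta$ — in particular handling the boundary distributions where $\wr$ concentrates near $1/2$ — is the crux of the proof.
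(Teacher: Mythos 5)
Your setup---reducing the lemma to a one-sided tail bound for the i.i.d.\ variables $V_i=\eta\,\wr^2(X_i,Y_i)-\one(\wr(X_i,Y_i)>1/2)$, computing their support $[\eta/4-1,\max\{\eta/4,\eta-1\}]$ of diameter $R=\max\{1,3\eta/4\}$, and noting that $\eta$-polarization is exactly $\exd[V]\ge 0$---coincides with the paper's proof, and your Hoeffding computation is correct: the range-based threshold is $R\sqrt{\tfrac{1}{2m}\log\tfrac1\delta}$, which exceeds the stated threshold in \eqref{ieq:wr2-concen} by the factor $\sqrt R$. The genuine gap is the step you yourself flagged as the crux: the variance bound $\Var(V)\le R/4$ that your Bernstein/Bennett route requires is not merely unproven, it is false. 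Take $\eta=2$, so $R=3/2$ and $R/4=3/8$, and let $\wr=1$ with probability $1/3$ and $\wr=1/2+\epsilon$ with probability $2/3$ (realizable by a pair of classifiers with weights $1/2\pm\epsilon$ that are both wrong on one third of the data, while only the heavier one is wrong on the rest). Then $V$ takes the values $1$ and $-1/2+O(\epsilon)$, so $\exd[V]=O(\epsilon)$ while $\Var(V)\to 1/2>3/8$. Since any sub-Gaussian variance proxy dominates the true variance, no variance-sensitive inequality can certify the proxy $R/4$ for this $V$. Worse, perturbing the probability $1/3$ slightly downward gives a non-$\eta$-polarized ensemble ($\exd[V]<0$) for which, by the central limit theorem, $\P\bigl(\tfrac1m\sum_i V_i>\sqrt{\tfrac{R}{2m}\log\tfrac1\delta}\bigr)\approx\delta^{R/(4\Var(V))}=\delta^{3/4}\gg\delta$ for large $m$ and small $\delta$. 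So for such $\eta$ the stated threshold is unattainable by \emph{any} concentration argument: the lemma as stated fails, precisely at the boundary-type distributions (mass just above $\wr=1/2$ together with mass at $\wr=1$) that you suspected would be problematic.

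For comparison, the paper's own proof commits exactly the misstep you refused to make. It derives, as you did, that $V$ is sub-Gaussian with parameter $\sigma=\max\{3\eta/4,1\}/2$ from its range, but then writes the Hoeffding deviation as $\sqrt{\tfrac{\max\{3\eta/4,1\}}{2m}\log\tfrac1\delta}$, which is what one would get from a variance proxy of $\max\{3\eta/4,1\}/4$---i.e., from the false strengthening above. The paper's argument therefore genuinely supports the lemma only when $\max\{3\eta/4,1\}=1$, that is $\eta\le 4/3$, where your plain Hoeffding step already completes the proof; for $\eta>4/3$ the defensible threshold is $\max\{3\eta/4,1\}\sqrt{\tfrac{1}{2m}\log\tfrac1\delta}$, and this correction would propagate to Theorem~\ref{thm:polarization4/3}, whose proof invokes the lemma exactly in the regime $\eta\ge 4/3$. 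In short: your proposal does not prove the lemma as stated, but no correct argument can; the factor-$\sqrt R$ discrepancy you isolated is a bug in the paper, not a gap in your understanding.
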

\begin{proof}   
   Let  $Z_{2i} = \wr^2(X_i,Y_i)$ and $Z_{0i} = \one(\wr(X_i,Y_i) > 1/2)$. Observe that $\eta\,Z_{2i}-Z_{0i}$ always takes a value between $[\frac{\eta}{4}-1, \max\{\frac{\eta}{4}, \eta-1\}]$ since $\wr(X_i,Y_i) \in [0,1]$. This implies that $\eta Z_{2i}-Z_{0i}$s are i.i.d. sub-Gaussian random variable with parameter $\sigma = \max\{\frac{3\eta}{4},1\}/2$. \smallskip\\
By letting $A_2 = \E[\eta\wr^2 - \one(\wr \geq 1/2)]$ and using the Hoeffding's inequality, we obtain  
\begin{align*}
    \frac{1}{m}(\eta Z_2-Z_0)- A_2 \leq \sqrt{\frac{\max\{\frac{3\eta}{4},1\}}{2m}\log\frac{1}{\delta}} 
\end{align*}
with probability at least $1-\delta$.

Therefore, $\rho$ is $\eta$-polarized with probability at least 
$1-\delta$ if 
\begin{align*}
    \frac{1}{m}(\eta Z_2-Z_0) > \sqrt{\frac{\max\{\frac{3\eta}{4},1\}}{2m}\log\frac{1}{\delta}}.
\end{align*}
\end{proof}
Now we use Lemma \ref{lem:wr2-concen} to prove Theorem \ref{thm:polarization4/3}.
\begin{proof}[Proof of Theorem \ref{thm:polarization4/3}]
Observe that $S=\frac{1}{m}Z_2$, $P=\frac{1}{m}Z_0$, and thus $\frac{1}{m}(\eta Z_2 - Z_0) = \eta S-P$. 
For $\eta \geq \frac{4}{3}$, the lower bound in Lemma \ref{lem:wr2-concen} is simply $\sqrt{\frac{3\eta}{8m}\log\frac{1}{\delta}}$, and the inequality \eqref{ieq:wr2-concen} can be viewed as a quadratic inequality in terms of $\sqrt{\eta}$. From quadratic formula, we know that
\begin{align*}
    \text{if}\quad \sqrt{\eta} > \frac{\sqrt{\frac{3\eta}{8m}\log\frac{1}{\delta}} + \sqrt{\frac{3\eta}{8m}\log\frac{1}{\delta} + 4SP}}{2S},
    \quad \text{  then} \quad \eta S-P - \sqrt{\frac{3\eta}{8m}\log\frac{1}{\delta}} > 0.
\end{align*}
Putting this together with Lemma \ref{lem:wr2-concen} proves the theorem:
\begin{align} \label{eq:polarity_ub}
     \eta\geq\max & \left\{ \frac{4}{3}, \left(\frac{\sqrt{\frac{3}{8m}\log\frac{1}{\delta}}+\sqrt{\frac{3}{8m}\log\frac{1}{\delta}+4SP}}{2S}\right)^{2}\right\} \\ 
    & \quad\Rightarrow\quad \eta S-P > \sqrt{\frac{3\eta}{8m}\log\frac{1}{\delta}}
    \underset{\text{Lemma} \ref{lem:wr2-concen}}{\quad\quad\Rightarrow\quad\quad} \rho\text{ is }\eta\text{-polarized w.p. }1-\delta, \nonumber
\end{align}
and thus the polarization $\eta_\rho$, the smallest $\eta$ such that $\rho$ is $\eta$-polarized,  is upper bounded by the right hand side of inequality \eqref{eq:polarity_ub}.
\end{proof}

\subsection{Proof of Theorem \ref{thm:second-order-competence}}
\label{proof:second-order-competence}

We start by proving the following lemma which relates the error rate of the majority vote, $L(\hmv)$, with the point-wise error rate, $\wr$, using Markov's inequality. In general, $L(\hmv) \leq \prd(\wr \geq 1/2)$ is true for any ensemble $\rho$. We prove a tighter version of this. The difference between the two can be non-negligible when dealing with an ensemble with finite number of classifiers. Refer to Appendix \ref{app:comptence-fail} and Definition \ref{def:tie-free} for more details regarding this difference and tie-free ensembles.
\begin{lemma} \label{lemma:tie-free-markov} For a tie-free ensemble $\rho$, we have the inequality
$L(\hmv) \leq \prd(\wr > 1/2).$
\end{lemma}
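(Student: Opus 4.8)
The plan is to first establish the coarse containment $\{h_\rho^\mv(X) \neq Y\} \subseteq \{\wr \geq 1/2\}$, and then to sharpen the inequality on the boundary set $\{\wr = 1/2\}$ using the tie-free hypothesis. For the first step, I would fix a data point $(x,y)$ and write $p_k = \E_\rho[\one(h(x)=k)]$ for the $\rho$-weight assigned to label $k$, so that $\wr(x,y) = 1 - p_y$ and $\sum_{k \in [K]} p_k = 1$. Suppose the majority vote errs, i.e.\ $h_\rho^\mv(x) = j$ for some $j \neq y$. By the definition of $h_\rho^\mv$ as an $\argmax$, the winning label satisfies $p_j \geq p_y$; combined with $p_j + p_y \leq \sum_k p_k = 1$, this forces $2p_y \leq 1$, hence $p_y \leq 1/2$ and $\wr(x,y) \geq 1/2$. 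Integrating over $\cd$ gives the standard bound $L(\hmv) \leq \prd(\wr \geq 1/2)$.

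For the sharpening, I would decompose $\prd(\wr \geq 1/2) = \prd(\wr > 1/2) + \prd(\wr = 1/2)$ and argue that the majority vote cannot err on $\{\wr = 1/2\}$ once ties are excluded. On this set $p_y = 1/2$, so the remaining mass $1/2$ is distributed among the labels $k \neq y$. If no single incorrect label carries all of this mass, then $p_k < 1/2 = p_y$ for every $k \neq y$, so $y$ is the unique maximizer and $h_\rho^\mv(x) = y$ is correct. The only way the vote can err while $\wr = 1/2$ is therefore when some incorrect label $j$ satisfies $p_j = 1/2 = p_y$, which is precisely the defining condition of $\tie(\rho,\cd)$ from Definition~\ref{def:tie-free}. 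Consequently $\{h_\rho^\mv(X) \neq Y\} \cap \{\wr = 1/2\} \subseteq \tie(\rho,\cd)$.

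Putting the two pieces together, I would write
\begin{align*}
L(\hmv) = \prd(h_\rho^\mv(X) \neq Y, \wr > 1/2) + \prd(h_\rho^\mv(X) \neq Y, \wr = 1/2) \leq \prd(\wr > 1/2) + \prd(\tie(\rho,\cd)),
\end{align*}
and then invoke the tie-free hypothesis $\prd(\tie(\rho,\cd)) = 0$ to conclude $L(\hmv) \leq \prd(\wr > 1/2)$.

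The only genuinely delicate point is the boundary analysis: one must verify that on $\{\wr = 1/2\}$ the vote fails solely on the tie set, which hinges on the dichotomy between the misclassification mass concentrating on a single label (a genuine tie) versus spreading across several labels (in which case $y$ remains the strict plurality winner). Everything else is routine bookkeeping with the $\argmax$ definition and the nonnegativity and normalization of the weights $p_k$.
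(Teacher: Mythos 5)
Your proof is correct and follows essentially the same route as the paper's: both rest on the dichotomy that a majority-vote error at a point with $\wr(x,y) \leq 1/2$ forces some incorrect label to carry weight exactly $1/2$ alongside the true label, i.e., the point lies in $\tie(\rho,\cd)$, which the tie-free hypothesis renders negligible. The only difference is cosmetic: the paper states the containment $\{\wr \leq 1/2\} \subseteq \{\hmv(X) = Y\}$ directly, while you prove the complementary containment $\{\hmv(X) \neq Y\} \subseteq \{\wr > 1/2\} \cup \tie(\rho,\cd)$, and your explicit handling of the tie set as a $\prd$-null set (rather than tacitly treating it as empty) is in fact slightly more careful than the paper's wording.
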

\begin{proof}
For given feature $x$, $\wr \leq 1/2$ implies that more than or exactly $\rho-$weighted half of the classifiers outputs the true label. Since the ensemble $\rho$ is tie-free, $\hmv$ outputs the true label if $\wr \leq 1/2$. Therefore, $\{(x,y) \mid \wr(x,y) \leq 1/2\} \subset \{(x,y) \mid \hmv(x) = y\}$. Applying $\prd$ on the both sides proves the lemma.
\end{proof}

The following lemma appears as Lemma 2 in \cite{second-order-mv-bounds2020}. This lemma draws the connection between the point-wise error rate, $\wr$ and the tandem loss, $\ellh$.
\begin{lemma} \label{lemma:tandem} 
The equality $\exd[{\wr}^2] = \ellh$ holds.
\end{lemma}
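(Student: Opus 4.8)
The plan is to unfold the square $\wr^2$ into a product of two independent copies of the inner expectation and then swap the order of integration. The starting point is the definition $\wr(X,Y) = \E_\rho[\one(h(X) \neq Y)]$, so that $\wr^2(X,Y)$ is literally the square of an expectation over a single classifier $h \sim \rho$.

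First I would observe that, for fixed $(X,Y)$, squaring an expectation over $h$ equals the product of two expectations over independent copies $h, h' \sim \rho$. Concretely, $\wr^2(X,Y) = \E_\rho[\one(h(X) \neq Y)]\,\E_\rho[\one(h'(X) \neq Y)] = \E_{\rho^2}[\one(h(X) \neq Y)\one(h'(X) \neq Y)]$, where the final equality uses independence of $h$ and $h'$ under $\rho^2$ to merge the two separate expectations into a single expectation against the product measure.

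Next I would apply $\exd$ to both sides and interchange the order of $\exd$ and $\E_{\rho^2}$. This interchange is justified by Tonelli's theorem, since the integrand $\one(h(X) \neq Y)\one(h'(X) \neq Y)$ is nonnegative (indeed bounded in $[0,1]$). After swapping, the inner expectation $\exd[\one(h(X) \neq Y)\one(h'(X) \neq Y)]$ is exactly the tandem loss $L(h,h')$ by its definition, and taking the outer expectation over $\rho^2$ yields $\ellh$, as claimed.

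The argument is essentially a one-line manipulation, so I do not anticipate any serious obstacle. The only point requiring care is the legitimacy of the Fubini/Tonelli interchange, which is immediate from the nonnegativity and boundedness of the indicator product; I would therefore expect the whole proof to occupy only a few lines.
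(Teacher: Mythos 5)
Your proof is correct and is essentially the canonical argument for this identity: the paper itself states the lemma without proof, citing it as Lemma 2 of \cite{second-order-mv-bounds2020}, and the proof there consists of the same two steps you give---writing $\wr^2(X,Y)$ as $\E_{\rho^2}[\one(h(X)\neq Y)\one(h'(X)\neq Y)]$ over independent copies $h,h'\sim\rho$, then exchanging $\exd$ with $\E_{\rho^2}$ by Fubini/Tonelli. Your attention to the interchange justification (nonnegativity and boundedness of the integrand) is the only nontrivial point, and you handle it correctly.
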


The next lemma appears as Lemma 4 in \cite{theisen2023}. This lemma provides an upper bound on the tandem loss, $\ellh$, in terms of the average error rate, $\elh$, and the average disagreement, $\edh$.

\begin{lemma} \label{lemma:kclass_tandem_ub}
For the $K$-class problem,
\begin{align*}
    \ellh \leq \frac{2(K-1)}{K}\left(\elh - \frac{1}{2}\edh\right).
\end{align*}
\end{lemma}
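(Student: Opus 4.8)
The plan is to reduce the claimed inequality to a pointwise statement and then integrate over $\cd$. Fix a data point $(x,y)$ and write $p_j = \prho(h(x) = j)$ for the $\rho$-probability that a classifier outputs label $j$, so that $\sum_{j=1}^K p_j = 1$ and the pointwise error rate is $\wr = 1 - p_y$. By Lemma \ref{lemma:tandem} the tandem loss contributes $\exd[\wr^2]$, whose pointwise value is $(1-p_y)^2$ (this is just independence of $h$ and $h'$ under $\rho^2$). The pointwise disagreement is $\prr(h(x)\neq h'(x)) = 1 - \sum_{j=1}^K p_j^2$, and the pointwise error is $1 - p_y$. Since $\ellh$, $\elh$, and $\edh$ are all $\cd$-expectations of these pointwise quantities, it suffices to establish the pointwise inequality
\[
(1-p_y)^2 \leq \frac{2(K-1)}{K}\left[(1-p_y) - \frac12\Big(1 - \sum_{j=1}^K p_j^2\Big)\right]
\]
for every admissible probability vector $(p_1,\dots,p_K)$, and then take $\exd$ of both sides.

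The key step is to lower-bound $\sum_j p_j^2$ subject to the simplex constraint. Splitting off the $y$-term, $\sum_j p_j^2 = p_y^2 + \sum_{j\neq y} p_j^2$, and applying Cauchy--Schwarz (equivalently, the QM--AM inequality) to the $K-1$ wrong-label masses, which sum to $1 - p_y$, gives $\sum_{j\neq y} p_j^2 \geq (1-p_y)^2/(K-1)$. Writing $w := 1-p_y$ and substituting this bound into the bracket, the bracket equals $w - \tfrac12 + \tfrac12\sum_j p_j^2$, which is increasing in $\sum_j p_j^2$; at the lower bound all linear-in-$w$ terms cancel and it collapses to exactly $\tfrac12\cdot\tfrac{K}{K-1}\,w^2$. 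Hence the right-hand side is at least $\frac{2(K-1)}{K}\cdot\frac{K}{2(K-1)}\,w^2 = w^2 = (1-p_y)^2$, which is precisely the left-hand side, so the pointwise inequality holds, with equality exactly when the incorrect mass is spread uniformly across the $K-1$ wrong labels.

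This calculation is short and the only inequality invoked is Cauchy--Schwarz, so there is no serious obstacle; the one thing to watch is the bookkeeping that makes the bracket reduce to a clean multiple of $w^2$, since the whole content of the lemma is that the constant $\frac{2(K-1)}{K}$ is exactly what converts the worst-case (uniformly spread) disagreement into the tandem loss without slack. As a sanity check, the equality case just identified---uniform spread over the wrong labels---is precisely the configuration used in Appendix \ref{app:ryan-bound-tight} to show that the downstream bound \eqref{ieq:ryan-second-order-bound} is tight, confirming that no factor has been lost along the way.
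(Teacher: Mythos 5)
Your proof is correct, and it is worth noting that the paper itself offers no proof of this lemma at all: it is imported verbatim as Lemma~4 of \cite{theisen2023}. Your argument makes the paper self-contained. The route you take---reduce to a pointwise inequality in the label-probability vector $(p_1,\dots,p_K)$, write $w=1-p_y$, observe that the bracket collapses to $\tfrac12 w^2+\tfrac12\sum_{j\neq y}p_j^2$, and lower-bound $\sum_{j\neq y}p_j^2\geq w^2/(K-1)$ by Cauchy--Schwarz---is the natural one, and the bookkeeping checks out: the bracket is indeed at least $\tfrac{K}{2(K-1)}w^2$, so multiplying by $\tfrac{2(K-1)}{K}$ recovers exactly $w^2=(1-p_y)^2$, whose $\cd$-expectation is $\ellh$ by Lemma~\ref{lemma:tandem}. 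What this buys, beyond self-containment, is a transparent identification of where the constant $\tfrac{2(K-1)}{K}$ comes from (the worst case of the power-mean inequality over the $K-1$ wrong labels) and of the exact equality configuration.

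One caveat: your closing sanity check is wrong. The equality case of your pointwise inequality is \emph{uniform} spread of the incorrect mass over the $K-1$ wrong labels, but the example in Appendix~\ref{app:ryan-bound-tight} concentrates all incorrect mass on a \emph{single} wrong label---that is how $\edh = 2\epsilon(1-\epsilon)$ arises there (uniform spread would give $\edh = 2\epsilon - \tfrac{K}{K-1}\epsilon^2$). For $K>2$ that configuration is strictly slack in your lemma, so the appendix example does not certify your equality case; it concerns the downstream bound \eqref{ieq:ryan-second-order-bound} in a different regime. This does not affect the validity of your proof, but the remark should be deleted or corrected.
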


Now we use these results to prove Theorem \ref{thm:second-order-competence}. 

\begin{proof}[Proof of Theorem \ref{thm:second-order-competence}]
Putting Lemmas \ref{lemma:tie-free-markov}, \ref{lemma:tandem}, and \ref{lemma:kclass_tandem_ub} and the definition of the polarization together proves the theorem:
\begin{align*}
    L(\hmv) \underset{\text{Lemma \ref{lemma:tie-free-markov}}}{\leq} & 
    \prd(\wr > 1/2) \underset{\text{polarization}}{\,\leq\,} \eta_\rho\,\exd[\wr^2] \\
    & \underset{\text{Lemma \ref{lemma:tandem}}}{=} \eta_\rho\,\ellh
    \underset{\text{Lemma \ref{lemma:kclass_tandem_ub}}}{=} \frac{2\eta_\rho(K-1)}{K}\left(\E_h[\testerr(h)] - \frac{1}{2}\E_{h,h'}[\dis(h,h')]\right) .
\end{align*}
\end{proof}

\subsection{Proof of Theorem \ref{thm:entropy-restricted}}
\label{proof:entropy-restricted}
We start with a lemma which is a corollary of \emph{Newton's inequality}.

\begin{lemma}
\label{lemma:Newton}
For any collection of probabilities $p_{1},\dots,p_{n}$, the following inequality holds.
\[
\sum_{1 \leq i < j \leq n}p_{i}p_{j} \,\leq\,  
\frac{n-1}{2n} \left(\sum_{i=1}^{n}p_{i}\right)^{2} .
\]
\end{lemma}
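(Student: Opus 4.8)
The plan is to reduce the claimed inequality to the Cauchy--Schwarz inequality (equivalently, the quadratic-mean/arithmetic-mean inequality), which is the substance behind this special case of Newton's inequality. Write $S = \sum_{i=1}^n p_i$ and expand the square:
\[
S^2 = \left(\sum_{i=1}^n p_i\right)^2 = \sum_{i=1}^n p_i^2 + 2\sum_{1 \leq i < j \leq n} p_i p_j,
\]
so that the quantity of interest can be written as $\sum_{i<j} p_i p_j = \tfrac12\bigl(S^2 - \sum_i p_i^2\bigr)$.

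First I would substitute this identity into the target inequality. The claim $\sum_{i<j} p_i p_j \leq \frac{n-1}{2n}S^2$ becomes, after multiplying through by $2$ and rearranging, the equivalent statement $\frac{1}{n}S^2 \leq \sum_{i=1}^n p_i^2$; that is,
\[
\left(\sum_{i=1}^n p_i\right)^2 \leq n \sum_{i=1}^n p_i^2.
\]
Then I would invoke Cauchy--Schwarz against the all-ones vector, $\bigl(\sum_i p_i \cdot 1\bigr)^2 \leq \bigl(\sum_i p_i^2\bigr)\bigl(\sum_i 1^2\bigr) = n\sum_i p_i^2$, which is exactly the needed bound. Tracing the equivalences backward then yields the lemma.

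There is essentially no obstacle here, so the main point is to present the reduction cleanly rather than to overcome any difficulty. It is worth noting that the hypothesis that the $p_i$ are probabilities is never actually used: the inequality holds for arbitrary real numbers, and the ``probabilities'' framing merely reflects the setting in which the lemma is applied later (equality holds precisely when all the $p_i$ coincide). An alternative route is to cite the general Newton inequality relating the elementary symmetric polynomials $e_1^2 \geq \frac{2n}{n-1}\,e_2$ and read off the claim directly, but the Cauchy--Schwarz derivation is self-contained and shorter, so I would prefer it.
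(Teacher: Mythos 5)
Your proof is correct. The paper's own proof is a one-liner: it cites Newton's inequality for elementary symmetric polynomials, $e_2/\binom{n}{2} \leq (e_1/n)^2$ with $e_1 = \sum_i p_i$ and $e_2 = \sum_{i<j} p_i p_j$, and rearranges --- which is precisely the ``alternative route'' you mention at the end and decline. Your route instead unfolds that black box: you expand $S^2 = \sum_i p_i^2 + 2\sum_{i<j} p_i p_j$, observe the claim is equivalent to $S^2 \leq n \sum_i p_i^2$, and close with Cauchy--Schwarz against the all-ones vector. The two arguments have the same mathematical substance (the $k=1$ case of Newton's inequality \emph{is} this Cauchy--Schwarz statement), but yours is self-contained and elementary, requiring no external reference, and it makes visible two facts the paper's citation obscures: the hypothesis that the $p_i$ are probabilities is never used (the bound holds for all reals), and equality holds exactly when all $p_i$ coincide. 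What the paper's version buys is brevity and a pointer to the general family of Newton inequalities, of which this is the simplest instance; what yours buys is transparency. Both are complete proofs.
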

\begin{proof}
    Newton's inequality states that 
    \begin{align*}
        \frac{e_2}{\binom{n}{2}} \leq \left(\frac{e_1}{n}\right)^2 \qquad \text{where} \quad  e_1 = \sum_{i=1}^{n} p_i  \quad \text{and} \quad e_2 = \sum_{1 \leq i < j \leq n} p_i \, p_j .
    \end{align*}
    Rearranging the terms gives the lemma.
\end{proof}

Now we use this and the previous lemmas to prove Theorem \ref{thm:entropy-restricted}.

\begin{proof}[Proof of Theorem \ref{thm:entropy-restricted}]
From Lemma \ref{lemma:tie-free-markov}, Lemma \ref{lemma:tandem}, and the definition of $\eta$-polarized ensemble, we have the following relationship between $L(\hmv)$ and $\ellh$:
\begin{align}\label{eq:entropy-proof0}
    \testerr(\hmv) \underset{\text{Lem. \ref{lemma:tie-free-markov}}}{\leq} 
    \mathbb{P}_\Dc(\wr > 1/2) 
    \underset{\eta\text{-polarized}}{\leq} \eta\,\exd[{\wr}^2] \underset{\text{Lemma \ref{lemma:tandem}}}{=} \eta\, \ellh.
\end{align}
From this, it suffices to prove that $h_\alpha\, \ellh$ is smaller than the upper bound in the theorem. 
First, observe the following decomposition of $\ellh$:
\begin{align}\label{eq:entropy-proof1}
    \ellh = \E_{\Dc}\left[\prho(h(X)\neq Y)^{2}\right] 
    = \E_{\Dc}\left[\prho(h(X)\neq Y)- \prr(h(X) \neq Y, h'(X) = Y)\right] .
\end{align}
For any predictor mapping into $K$ classes, let $y$ denote the true label for an input $x$. Now we derive a lower bound of $\prr(h(X) \neq Y, h'(X) = Y)$ using the following decomposition of $\prr(h(x) \neq h'(x))$:
\begin{align*}
    \frac{1}{2}\,\prr&(h(x) \neq h'(x)) \\ 
    & = \prr(h(x)\neq y, h'(x)=y) 
    + \sum_{\substack{i\notin A(x) \\ j \in A(x)\backslash\{y\}}}p_{i}p_{j},
    + \sum_{\substack{i,j\in A(x)\backslash\{y\} \\ i < j}}p_{i}p_{j} 
    + \sum_{\substack{i,j\notin A(x) \\ i < j}}p_{i}p_{j} ,
\end{align*}
where $p_i := p_i(x) = \prho(h(x)=i)$.
We let $\Delta_x := \prho(h(x)\notin A(x))$ and apply Lemma \ref{lemma:Newton} to the last two terms:
\begin{align*}
    \frac{1}{2}\,\prr&(h(x) \neq h'(x)) \\
    & \hspace{-0.3cm} = \prr(h(x)\neq y, h'(x)=y) 
    + \Delta_x(1\!-\!p_Y\!-\!\Delta_x)
    + \sum_{\substack{i,j\in A(x)\backslash\{y\} \\ i < j}}p_{i}p_{j} 
    + \sum_{\substack{i,j\notin A(x) \\ i < j}}p_{i}p_{j} \\
    & \hspace{-0.85cm} \underset{\text{Lemma \ref{lemma:Newton}}}{\leq} \prr(h(x)\neq y, h'(x)=y) 
    + \Delta_x(1\!-\!p_y\!-\!\Delta_x) 
    + \frac{M\!-\!2}{2(M\!-\!1)}(1\!-\!p_y\!-\!\Delta_x)^2 + \frac{K\!-\!M\!-\!1}{2(K\!-\!M)}\Delta_x^2 .
\end{align*}

Rearranging the terms and plugging $1\!-\!p_Y = \prho(h(x)\neq y)$ gives 
\begin{align*}
    \prr&(h(x)\neq y, h'(x)=y) \\
    & \geq \frac{1}{2}\,\prr(h(x) \neq h'(x)) - \frac{\Delta_x}{M\!-\!1}\prho(h(x)\neq y) -\frac{M\!-\!2}{2(M\!-\!1)}\prho(h(x)\neq y)^2 \\
    & \hspace{0.5cm} + \frac{K\!-\!1}{2(K\!-\!M)(M\!-\!1)}\Delta_x^2 \\
    & \geq \frac{1}{2}\,\prr(h(x) \neq h'(x)) - \frac{\Delta_x}{M\!-\!1}\prho(h(x)\neq y) -\frac{M\!-\!2}{2(M\!-\!1)}\prho(h(x)\neq y)^2 \\
    & \geq \frac{1}{2}\,\prr(h(x) \neq h'(x)) - \frac{\Delta}{M\!-\!1}\prho(h(x)\neq y) -\frac{M\!-\!2}{2(M\!-\!1)}\prho(h(x)\neq y)^2 ,
\end{align*}
where the last inequality comes from the condition $\Delta_x := \prho(h(x)\notin A(x)) \leq \Delta$. Putting this together with the equality \eqref{eq:entropy-proof1} gives
\begin{align*}
    \E_{\Dc} \left[\prho(h(X)\neq Y)^{2}\right] & \leq \left(1+\frac{\Delta}{M\!-\!1}\right)\exd\left[\prho(h(X)\neq Y)\right] - \frac{1}{2}\,\E_{\Dc}\left[\prr(h(X) \neq h'(X))\right] \\
    & \hspace{0.5cm} + \frac{M\!-\!2}{2(M\!-\!1)}\exd\left[\prho(h(X)\neq Y)^2\right],
\end{align*}
which implies
\begin{align*}
    \ellh & = \E_{\Dc} \left[\prho(h(X)\neq Y)^{2}\right] \\
    & \leq \frac{2(M\!-\!1)}{M}\left[ \left(1+\frac{\Delta}{M\!-\!1}\right)\exd\left[\prho(h(X)\neq Y)\right] - \frac{1}{2}\,\E_{\Dc}\left[\prr(h(X) \neq h'(X))\right] \right] \\
    & = \frac{2(M\!-\!1)}{M}\left[ \left(1+\frac{\Delta}{M\!-\!1}\right)\elh - \frac{1}{2}\,\edh \right].
\end{align*}
Combining this with inequality \eqref{eq:entropy-proof0} concludes the proof. 
\end{proof}

\subsection{Proof of Theorem \ref{thm:pipj-restricted} and Corollary \ref{cor:delta}}\label{proof:delta}
First, we prove Theorem \ref{thm:pipj-restricted} by decomposing the point-wise disagreement between constituent classifiers. 
\begin{proof}[Proof of Theorem \ref{thm:pipj-restricted}]
The following decomposition of $\prr(h(x) \neq h'(x))$ holds:
\begin{align*}
    \frac{1}{2}\,\prr(h(x) \neq h'(x))
    = \prr(h(x)\neq y, h'(x)=y) + \frac{1}{2}\prr(h(x)\neq y, h'(x)=y, h(x)\neq h'(x)).
\end{align*}
Applying $\exd$ to both sides and using the given condition \eqref{ieq:delta-condition}, we obtain,
\begin{align*}
    \frac{1}{2}\,\exd[\prr(h(X) \neq h'(X))] 
    \leq \E_{\Dc}\left[\prr(h(X) \neq Y, h'(X) = Y)\right] + \varepsilon\,\exd[\prho(h(X)\neq Y)].
\end{align*}
The left hand side equals $\frac{1}{2}\edh$, and the second term on the right hand side is simply $\varepsilon\,\elh$. Hence, the inequality above can be rephrased as follows: 
\begin{align}\label{eq:entropy-proof9}
    \frac{1}{2}\edh - \varepsilon\,\elh
    \leq \E_{\Dc}\left[\prr(h(X) \neq Y, h'(X) = Y)\right].
\end{align}

Putting this together with the inequality \eqref{eq:entropy-proof0} and the equality \eqref{eq:entropy-proof1}, gives
\begin{align*}
    L(\hmv)
    & \underset{\text{Ineq.}\ref{eq:entropy-proof0}}{\leq} \eta\, \ellh
    \underset{\text{Eq.}\ref{eq:entropy-proof1}}{=} \eta\,\E_{\Dc}\left[\prho(h(X)\neq Y)- \prr(h(X) \neq Y, h'(X) = Y)\right] \\
    &\hspace{0.25cm} = \hspace{0.25cm} \eta\,\left[\elh - \E_{\Dc}\left[\prr(h(X) \neq Y, h'(X) = Y)\right]\right] \\
    & \underset{\text{Ineq.}\ref{eq:entropy-proof9}}{\leq} \eta\,\left[(1+\varepsilon) \elh - \frac{1}{2}\edh\right].
\end{align*}
\end{proof}

Next, we use Lemma \ref{lemma:Newton} to prove Corollary \ref{cor:delta}.

\begin{proof}[Proof of Corollary \ref{cor:delta}]
    Let $p_i := \prho(h(x)=i)$ for $i \in [K]$, and let $y=K$ be the true label, without loss of generality. 
    Then, we observe 
    \begin{align*}
        \prr\left(h(X)\neq Y, h'(X)\neq Y, h(X) \neq h'(X)\right) = \sum_{\substack{i,j=1 \\ i\neq j}}^{K-1} p_ip_j
        \quad\text{and}\quad 
        \prho\left(h(X)\neq Y\right) = \sum_{i=1}^{K-1} p_i.
    \end{align*}
    Lemma \ref{lemma:Newton} gives us the following: 
    \begin{align*}
        \frac{\sum_{1\leq i \neq j \leq K-1} p_ip_j}{2\sum_{1\leq i \leq K-1} p_i}
        \leq \frac{\sum_{1\leq i \neq j \leq K-1} p_ip_j}{2 (\sum_{1\leq i \leq K-1} p_i)^2}
        \leq \frac{\sum_{1\leq i < j \leq K-1} p_ip_j}{(\sum_{1\leq i \leq K-1} p_i)^2}
        \underset{\text{Lemma }\ref{lemma:Newton}}{\leq} \frac{K\!-\!2}{2(K\!-\!1)},
    \end{align*}
    where the first inequality used the fact that $\sum_{i=1}^{K-1} p_i \leq 1$. 
    Thus, $\varepsilon = \frac{K\!-\!2}{2(K\!-\!1)}$ satisfies the condition \eqref{ieq:delta-condition}, and the result follows from Theorem \ref{thm:pipj-restricted}.
\end{proof}

\subsection{Invariance principle of $U$-statistics}
\label{proof:ustat-inv}
In this subsection, we state the invariance principle of $U$-statistics, which plays a main role in the proof of Theorem \ref{thm:disg}. We note that this is a special case of an approximation of random walks (Theorem 23.14 in \cite{book_kallenberg}) combined with functional central limit theorem (Donsker's theorem). Here, $\mathcal{D}[0,1]$ is the Skorokhod space on $[0,1]$, which is the space of all real-valued right-continuous functions on $[0,1]$ equipped with the Skorokhod metric/topology (see Section 14 in \cite{book_billingsley}).
\begin{theorem}[Theorem 5.2.1 in \cite{book_ustatistics}]\label{thm:ustat}
    Define a $U$-statistic $U_k = \binom{k}{2}^{-1}\sum_{1\leq i<j \leq k} \Phi(h_i, h_j)$, the expectation of the kernel $\Phi$ as $\Phi_0 = \E_{(h,h')\sim\rho^2}\Phi(h, h')$ and the first-coordinate variance $\sigma_1^2= \Var_{h\sim\rho}(g_1(h))$, where $g_1(h)=\E_{h' \sim\rho}\Phi(h', h)$. 
    Let $\xi_n = (\xi_n(t), t \in [0,1])$, where 
    \begin{align*}
        \xi_n\left(\frac{k}{n}\right) = \frac{k(U_k - \Phi_0)}{2\sqrt{n\sigma_1^2}} \qquad \text{for}\quad\! k=0,1,...,n-1,
    \end{align*}
    and $\xi_n(t) = \xi_n([nt]/n)$, with $[x]$ denoting the greatest integer less than or equal to $x$. Then, $\xi_n$ converges weakly in $\mathcal{D}[0,1]$ to a standard Wiener process as $n\to\infty$.
\end{theorem}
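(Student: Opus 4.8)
The plan is to reduce the $U$-statistic process to a rescaled random walk via the \emph{Hoeffding decomposition}, apply Donsker's functional CLT to the resulting partial-sum process, and show that the leftover degenerate part vanishes uniformly; Slutsky's theorem in $\mathcal{D}[0,1]$ then transfers the Wiener limit from the random walk to $\xi_n$. Concretely, I would set $\psi_1(h) = g_1(h) - \Phi_0$ for the centered first projection (so $\E\psi_1 = 0$ and $\Var\psi_1 = \sigma_1^2$) and $\psi_2(h,h') = \Phi(h,h') - g_1(h) - g_1(h') + \Phi_0$ for the canonical second-order kernel. Since $\Phi$ is symmetric, the degree-two Hoeffding decomposition reads, for $k \ge 2$,
\[
U_k - \Phi_0 = \frac{2}{k}\sum_{i=1}^{k}\psi_1(h_i) + \binom{k}{2}^{-1}\!\!\sum_{1\le i<j\le k}\!\!\psi_2(h_i,h_j).
\]
Plugging this into $\xi_n(k/n) = \tfrac{k(U_k-\Phi_0)}{2\sqrt{n\sigma_1^2}}$ with $k=[nt]$ splits the process as $\xi_n = \xi_n^{(1)} + \xi_n^{(2)}$, where the linear part is $\xi_n^{(1)}(t) = \tfrac{1}{\sqrt{n\sigma_1^2}}\sum_{i=1}^{[nt]}\psi_1(h_i)$ and the degenerate remainder is $\xi_n^{(2)}(t) = \tfrac{[nt]}{2\sqrt{n\sigma_1^2}}\binom{[nt]}{2}^{-1}\sum_{i<j\le [nt]}\psi_2(h_i,h_j)$. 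At the outset I would observe that $\Phi \in [0,1]$ is bounded, so $\psi_1$ and $\psi_2$ have finite moments of all orders, and I work in the nondegenerate regime $\sigma_1^2 > 0$, which is precisely the setting in which the stated normalization is the correct one.

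For the linear part, the summands $\psi_1(h_1),\psi_1(h_2),\dots$ are i.i.d.\ with mean zero and finite variance $\sigma_1^2$, so Donsker's theorem gives that $\xi_n^{(1)}$ converges weakly in $\mathcal{D}[0,1]$ to a standard Wiener process $W$; this is exactly the ``random walk / functional CLT'' ingredient flagged before the statement. It then remains only to prove $\sup_{t\in[0,1]}|\xi_n^{(2)}(t)| \to 0$ in probability, after which Slutsky's theorem (adding a uniformly vanishing term preserves the weak limit in $\mathcal{D}[0,1]$) delivers $\xi_n \Rightarrow W$.

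The key structural fact for the remainder is that $S_k := \sum_{1\le i<j\le k}\psi_2(h_i,h_j)$ is a martingale with respect to $\mathcal{F}_k = \sigma(h_1,\dots,h_k)$: because $\psi_2$ is canonical, $\E[\psi_2(h_i,h_{k+1})\mid\mathcal{F}_k] = 0$, so the increment $S_{k+1}-S_k = \sum_{i\le k}\psi_2(h_i,h_{k+1})$ has zero conditional mean. Doob's $L^2$ maximal inequality then yields $\E[\max_{k\le n}S_k^2] \le 4\,\E[S_n^2] = 4\binom{n}{2}\Var(\psi_2) = O(n^2)$. Writing $\xi_n^{(2)}(t) = \tfrac{S_{[nt]}}{\sqrt{n\sigma_1^2}\,([nt]-1)}$, a dyadic-blocking argument — apply Doob on each block $2^j\le k<2^{j+1}$, where $k-1 \asymp 2^j$ while $\max_{k<2^{j+1}}|S_k| = O_p(2^{j+1})$, giving an $O(1)$ contribution per block over $O(\log n)$ blocks — produces $\max_{2\le k\le n}\tfrac{|S_k|}{k-1} = O_p(\sqrt{\log n})$, and hence $\sup_t|\xi_n^{(2)}(t)| = O_p(\sqrt{\log n/n}) \to 0$.

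The hard part will be exactly this uniform (functional) negligibility of the degenerate term. Pointwise it is trivial, since $\Var(\xi_n^{(2)}(t)) = O(1/n)$; the difficulty is controlling the supremum over $t$, and in particular the small-index range where the $1/(k-1)$ factor is largest, which is what forces me to use the martingale structure of $S_k$ together with Doob's inequality and the blocking rather than a crude pointwise variance bound. Once that estimate is established, combining $\xi_n^{(1)} \Rightarrow W$ with $\sup_t|\xi_n^{(2)}(t)| \to 0$ via Slutsky completes the proof.
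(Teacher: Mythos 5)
Your proposal is correct and takes essentially the same route the paper relies on: the paper does not prove Theorem~\ref{thm:ustat} itself but cites it (Theorem 5.2.1 of \cite{book_ustatistics}), remarking that it is a random-walk approximation combined with Donsker's functional CLT, which is exactly what your Hoeffding decomposition carries out --- the linear projection $\frac{1}{\sigma_1\sqrt{n}}\sum_{i\le[nt]}\psi_1(h_i)$ handled by Donsker, and the degenerate remainder shown uniformly negligible via the martingale property of $S_k$, Doob's $L^2$ maximal inequality, and dyadic blocking. Your added hypotheses ($\sigma_1^2>0$ and a bounded kernel, hence finite moments) are implicit in the stated normalization and hold in the paper's application where $\Phi(h,h')=\P_{\mathcal{D}}(h(X)\neq h'(X))\in[0,1]$, so the argument is complete as outlined.
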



\section{Details on our empirical results} 
\label{app:exp}

In this section, we provide additional details on our empirical results.

\subsection{Trained classifiers}
On CIFAR-10 \cite{cifar10} train set with size $50,000$, the following models were trained with 100 epochs, learning rate starting with $0.05$. For models trained with learning rate decay, we used learning rate $0.005$ after epoch $50$, and used $0.0005$ after epoch $75$. For following models, 5 classifiers are trained for each hyperparameter combination. Five classifiers differ in weight initialization and vary due to the randomized batches used during training.
\begin{itemize}
    \item ResNet18, every combination (width, batch size) of
    \begin{enumerate}
        \item[-] Width:$4,8,16,32,64,128$
        \item[-] Batch size: $16,128,256,1024$, with learning rate decay \\
        Additional batch size of $64,180,364$ for without learning rate decay
    \end{enumerate}
    \item ResNet50, ResNet101, every combination (width, batch size) of
    \begin{enumerate}
        \item[-] Width:$8,16$
        \item[-] Batch size: $64,256$, without learning rate decay
    \end{enumerate}
    \item VGG11, every combination (width, batch size) of
    \begin{enumerate}
        \item[-] Width:$16,64$
        \item[-] Batch size: $64,256$, without learning rate decay
    \end{enumerate}
    \item DenseNet40, every combination (width, batch size) of
    \begin{enumerate}
        \item[-] Width:$5,12,40$
        \item[-] Batch size: $64,256$, without learning rate decay
    \end{enumerate}
\end{itemize}

For models in Figure \ref{fig:tight}, more than $5$ classifiers were trained. The classifiers differ in weight initialization and vary due to the randomized batches used during training.
\begin{itemize}
    \item ResNet18 on CIFAR-10, width $16$ and batch size $64$ without learning rate decay ($20$ classifiers) \\
    {\hspace*{-0.5\leftmargin}} The models below are trained with learning rate $0.05$, momentum $0.9$ and weight decay $5$e-$4$ \\
    {\hspace*{-0.5\leftmargin}} with cosine annealing.
    \item MobileNet on MNIST, batch size $128$ ($10$ classifiers)
    \item ResNet18 on FMNIST, width $48$ and batch size $128$  ($10$ classifiers)
    \item ResNet18 on KMNIST, every combination of widths and batch sizes below ($8$ classifiers each)
    \begin{enumerate}
        \item[-] Width: $48, 64$
        \item[-] Batch size: $32, 64, 128$
    \end{enumerate}
\end{itemize}

\subsection{Majority vote and tie-free}
For an ensemble with $N$ classifiers, we generated $N$ uniformly-distributed random numbers $e_1,...,e_N \in [0,0.0001]$. Then used  $(\frac{1}{N}+e_1,... \frac{1}{N}+e_N)$ after normalization as weights for each classifier. This guarantees the ensemble to be tie-free.

\end{document}